\newtheorem{theorem}{Theorem}
\newtheorem{lemma}[theorem]{Lemma}
\newtheorem{corollary}[theorem]{Corollary}
\newtheorem{proposition}[theorem]{Proposition}
\let\top\intercal %
\DeclareBoldMathCommand{\u}{u}
\DeclareBoldMathCommand{\w}{w}
\DeclareBoldMathCommand{\x}{x}
\DeclareBoldMathCommand{\X}{X}
\DeclareBoldMathCommand{\grad}{g}
\newcommand{\RR}{\mathbb{R}}
\newcommand{\cS}{\mathcal{S}}
\newcommand{\cI}{\mathcal{I}}  %
\newcommand{\passed}{\mathcal{P}}         %
\newcommand{\filtered}{\mathcal{F}}       %
\newcommand{\gradlist}{\mathcal{L}}   %
\newcommand{\domain}{\mathcal{W}}
\newcommand{\Regn}{R}
\newcommand{\linReg}{\widetilde{\Regn}}  %
\newcommand{\Norm}[1]{\left\| #1 \right \|}
\newcommand{\intersect}{\cap}
\newcommand{\approxG}{\widetilde{G}}
\DeclareMathOperator*{\pr}{\mathbb P}
\DeclareMathOperator*{\ex}{\mathbb E}
\DeclareMathOperator{\LCB}{LCB}
\DeclareMathOperator{\Bernoullidist}{Bernoulli}
\DeclareMathOperator{\risk}{Risk}
\newcommand{\trisk}{\widetilde \risk}     %
\DeclareMathOperator{\argmin}{arg\,min}
\newcommand{\ind}{\mathbf 1}
\title{Robust Online Convex Optimization\\in the Presence of Outliers}
\author{%
\begin{minipage}{\textwidth}
\vspace{.5\baselineskip}
\noindent
\textbf{Tim van Erven} \hfill{} \texttt{tim@timvanerven.nl}\\
\textbf{Sarah Sachs} \hfill{} \texttt{s.c.sachs@uva.nl}\\
\emph{University of Amsterdam, The Netherlands}\\[.5\baselineskip]
\textbf{Wouter M. Koolen} \hfill{} \texttt{wmkoolen@cwi.nl}\\
\emph{Centrum Wiskunde \& Informatica, The
Netherlands}\\[.5\baselineskip]
\textbf{Wojciech Kot{\l}owski} \hfill{} \texttt{wkotlowski@cs.put.poznan.pl}\\
\emph{Pozna{\'n} University of Technology, Poland}\\[.5\baselineskip]
\end{minipage}
}
\begin{document}

\maketitle

\begin{abstract}%
  We consider online convex optimization when a number $k$ of data
  points are outliers that may be corrupted. We model this by
  introducing the notion of robust regret, which measures the regret
  only on rounds that are not outliers. The aim for the learner is to
  achieve small robust regret, without knowing where the outliers are.
  If the outliers are chosen adversarially, we show that a simple
  filtering strategy on extreme gradients incurs $O(k)$ additive overhead
  compared to the usual regret bounds, and that this is unimprovable,
  which means that $k$ needs to be sublinear in the number of rounds. We
  further ask which additional assumptions would allow for a linear
  number of outliers. It turns out that the usual benign cases of
  independently, identically distributed (i.i.d.) observations or
  strongly convex losses are not sufficient. However, combining i.i.d.\
  observations with the assumption that outliers are those observations
  that are in an extreme quantile of the distribution, does lead to
  sublinear robust regret, even though the expected number of outliers
  is linear.
\end{abstract}

\paragraph{keywords}%
  Online convex optimization, robustness, outliers

\section{Introduction}
  Methods for online convex optimization (OCO) are designed to work even in
  the presence of adversarially generated data
  \citep{Hazan2016,ShalevShwartz,CesaBianchiLugosi2006}, but this is
  only possible because strong boundedness assumptions are imposed on
  the losses that limit the influence of individual data points. On the
  other hand, the most practically successful methods do not enforce an
  a priori specified bound on the losses, but instead adapt to the norms
  of the observed gradients or to the observed loss range. For example,
  the regret bound for AdaGrad adapts to the ranges of the gradient
  components per dimension \citep{JMLR:v12:duchi11a}, the regret bound for
  online ridge regression scales with the largest observed loss
  \citep{Vovk01}, the regret bound for AdaHedge in the prediction with
  experts setting scales with the observed loss range of the experts
  \Citep{DeRooijEtAl}, the regret bound for online gradient descent on
  strongly convex losses scales with the maximum gradient norm squared
  \citep{HazanAgarwalKale2007}, etc. In all such cases a small number of
  outliers with large gradients among an otherwise benign dataset can
  significantly worsen performance. This is also clear directly from the
  algorithms themselves, where we see that large gradients have the
  effect of significantly decreasing the effective step size for all
  subsequent data points, leading to slower learning. Extreme outliers
  may occur naturally, for instance because of heavy-tailed
  distributions or sensor glitches, but if each loss is based on the
  input of a user, then we may also be concerned that a small number of
  adversarial users may try to poison the data stream
  \citep{DBLP:conf/iclr/KurakinGB17a}.

  We formally capture the robustness of OCO methods by modifying the
  standard setting to measure performance only on the rounds that are
  not outliers. The goal of the learner is to perform as well as if the
  outliers were not present, up to some overhead that is incurred for
  filtering out the outliers. As in standard OCO, learning proceeds in
  $T$ rounds, and at the start of each round $t$ the learner needs to
  issue a prediction $\w_t$ from a bounded convex domain. The
  environment then reveals a convex loss function $f_t$ with
  (sub)gradient $\grad_t := \nabla f_t(\w_t)$ at $\w_t$, and performance
  is measured by the cumulative difference between the learner's losses
  and the losses of the best fixed parameters $\u$. Unlike in the
  standard OCO setting, however, we only sum up losses over the subset
  of inlier rounds $\cS \subseteq \{1,2,\ldots,T\}$ that are not
  outliers, leading to the following notion of \emph{robust regret}:
  \begin{equation}\label{eqn:robustregret}
    R_T(\u,\cS) := \sum_{t \in \cS} \big(f_t(\w_t) - f_t(\u)\big).
  \end{equation}
  The challenge for the learner is to guarantee small robust regret
  without knowing $\cS$. Importantly, we aim for robust regret bounds
  that scale with the loss range or gradient norms of the rounds
  in~$\cS$, but not with the size of the outliers, so even extreme
  outliers should not be able to confuse the learner.

  In Section~\ref{sec:okbound}, we first consider the fully adversarial
  case where the only thing the learner knows is that there are at most
  $k$ outliers, so
              $T - |\cS| \leq k$,
  and both the inliers and the outliers are generated adversarially,
  without any bound on the range of the outliers, and with the range of
  the inliers also unknown a priori. We introduce a simple filtering
  approach that filters out some of the largest gradients, and passes on
  the remaining rounds to a standard online learning algorithm ALG. When
  the losses are linear, this approach is able to guarantee that
  \begin{equation}\label{eqn:introOKbound}
    R_T(\u,\cS) = R_T^\text{ALG}(\u) + O\big(G(\cS) k\big)
    \qquad
    \text{for all $\cS$ such that $T - |\cS| \leq k$ simultaneously,}
  \end{equation}
  where $G(\cS)$ is the norm of the largest gradient among the rounds
  in $\cS$ and $R_T^\text{ALG}(\u)$ is the regret of ALG on a subset of
  rounds under the guarantee that their gradient norms are at most $2
  G(\cS)$. The extension to general convex losses then follows from a
  standard reduction to the linear loss case. We follow up by showing
  that \eqref{eqn:introOKbound} is unimprovable, not just for
  adversarial losses, but even if the losses are independent and
  identically distributed (i.i.d.) according to a fixed probability
  distribution or if the losses are strongly convex. This fixes the
  dependence on the number of outliers $k$ to be linear in~$k$ in quite some
  generality. Nevertheless, in Section~\ref{sec:quantileMethod} we identify
  sufficient conditions to get around the linear dependence: if the gradients
  are i.i.d., and we take $\cS = \cS_p$ to be the rounds in which 
  $\|\grad_t\|_*$ is at most the $p$-quantile $G_p$ of the common distribution of
  their norms, then there exists a method based on approximating $G_p$ by its
  empirical counterpart on the available data that guarantees that the expected 
  robust regret is at most
  \begin{equation}\label{eqn:introquantilebound}
    \ex \sbr*{R_T(\u,\cS_p)} = O\del*{G_p \del*{\sqrt{p T} + \sqrt{p(1-p)T \ln T} +
    \ln^2 T}}.
  \end{equation}
  Since $O\del*{G_p \sqrt{p T}}$ would be expected if $\cS_p$ were known
  in advance, we see that the overhead grows sublinearly in $T$ and is
  even asymptotically negligible for outlier proportion $1-p = o(1/\ln(T))$.
  More generally, we extend
  this result such that the gradients do not need to be i.i.d.\
  themselves, but it is sufficient if there exist i.i.d.\ random
  variables $\X_t$ and a constant $L$ such that $\|\grad_t\|_* \leq L
  \|\X_t\|_*$. We then define the quantile with respect to the distribution of
  the $\X_t$. This covers nonlinear losses of the form $f_t(\w) = h_t(\w^\top
  \X_t)$ for convex functions $h_t$ that are $L$-Lipschitz, like the logistic
  loss $f_t(\w) = \ln(1+\exp(-Y_t \w^\top \X_t))$ and the hinge loss $f_t(\w) =
  \max\{1-Y_t\w^\top \X_t\}$ for $Y_t \in \{-1,+1\}$, both with $L=1$.

 \paragraph{Related Work}

The definition of robust regret may remind the reader of the adaptive
regret \citep{SheshradiHazan,pmlr-v37-daniely15}, which measures regret on a contiguous
interval of rounds~$\cI$ that is unknown to the learner. Since adaptive
regret can be controlled by casting it into the framework of specialist
(sleeping) experts \citep{fssw-se-97,ChernovVovk2009}, it is natural to
ask whether the same is possible for the robust regret. To apply the
specialist experts framework, we would assign a separate learner
(specialist) to each possible subset of rounds $\cS$ that would then be
active only on $\cS$, and such a pool of $m$ learners would be
aggregated using a meta-algorithm. Computational issues aside, this
approach runs into two problems: the first is that all existing
meta-algorithms assume the losses to be bounded within a known range,
and therefore cannot be applied since we do not assume that even the
range of the inliers is known. Second, even if the range issue could be
resolved, the specialist regret would incur a $\Omega(\sqrt{T \log m}) =
\Omega(\sqrt{k T \log(T/k)}$ overhead, already if we only consider all
$m = \binom{T}{T-k} \geq (T/k)^k$ possible subsets with exactly $k$
outliers. We see that $k$ now multiplies $T$, which is much worse than
the optimal additive dependence on $k$ in~\eqref{eqn:introOKbound}.

Reducing the dependence on the largest gradient norm has previously been considered in
the context of adaptive online and stochastic convex optimization \citep{JMLR:v12:duchi11a, pmlr-v97-ward19a}. However, these methods still depend on the average of all (squared) gradient norms, and therefore require these norms to be finite. In contrast to these adaptive methods, our method can handle a small number of adversarial samples, with large or even infinite norm, while our robust regret analysis still guarantees a sub-linear bound.  

In the context of stochastic optimization, \citet{juditsky2019algorithms} propose a robust version of mirror descent based on truncating the gradients returned by a stochastic oracle. Their main goal is to establish a sub-Gaussian confidence bound on the optimization error under weak assumptions about the tails of the noise distribution. Contrary to our setup, they control the smoothness of the objective and the variance of the noise, so that already a vanilla (non-robust) version of SGD would achieve a vanishing optimization error in expectation (but not with a sub-Gaussian confidence). \citet{pmlr-v97-diakonikolas19a} propose a robust meta-algorithm for stochastic optimization that repeatedly trains a standard algorithm as a base learner and filters out the outliers. This approach is conceptually similar to our filtering method, but it is designed to work in a batch setting, with the data (sample functions) given in advance. \citet{DBLP:journals/corr/abs-1802-06485} provide a robust batch algorithm for stochastic optimization by applying the ideas from robust mean estimation to robustify stochastic gradient estimates in a (batch) gradient descent algorithm.

In the online learning and bandit literature, interesting results were
obtained for dealing with adversarial corruptions of data that are
otherwise generated i.i.d., to still benefit from the stochastic setting
\citep{DBLP:conf/stoc/LykourisML18,pmlr-v99-gupta19a,DBLP:conf/nips/AmirAKML20}.
\citet{wang2018data} and \citet{pmlr-v120-zhang20b} further consider data poisoning attacks on an online learner, but the focus is on the optimization of the adversary, while the learner remains fixed.
In all these works, contrary to ours, the corrupted data is still
assumed to lie in the
same range as the non-corrupted data. A notable exception is the very
recent work of \citet{DBLP:journals/corr/abs-2010-04157}, which proposes
online algorithms for contextual bandits and linear regression in a
framework in which the linear model is realizable (well-specified) up to
small noise, and a fixed, randomly selected, fraction of examples is
arbitrarily corrupted (as in the Huber $\epsilon$-contamination model
\citep{Huber}), but still remains bounded. In contrast, we avoid strong
distributional assumptions such as model realizability, and do not make
any probabilistic assumptions about the corruption mechanism or impose
any constraints on the magnitude of the outliers.

Starting with pioneering works of Tukey and Huber \citep{Tukey,Huber} there has been a tremendous amount of past work in the area of robust statistics, which concerns the basic tasks of classical statistics in the presence of outliers and heavy-tailed distributions \citep{HuberBook}. A more recent line of research building on the work of \citet{Catoni2012,Minsker,LugosiMendelson2019,LugosiMendelson2019survey} concerns estimation with sub-Gaussian-style confidence for heavy-tailed distributions. 
Finally, our setup is different from, but conceptually related to, a
line of research on machine learning and statistical problems in the
presence of adversarial data corruptions \citep{Charikar_etal}. This has
been studied, for instance, in the context of parameter estimation
\citep{Lai_etal,10.5555/3310435.3310606,10.5555/3174304.3175475,LugosiMendelson2019adversarial,pmlr-v108-prasad20a},
robust PCA \citep{robustPCA}, regression
\citep{pmlr-v75-klivans18a,Diakonikolas_etal,pmlr-v108-liu20b},
classification
\citep{JMLR:v10:klivans09a,10.1109/TPAMI.2015.2456899,10.1145/3006384}
and many other cases. See the in-depth survey by \citet{DBLP:journals/corr/abs-1911-05911} for an overview of recent advances in this direction.

\paragraph{Outline}
 
We start by summarizing our setting and notation in the next section.
Then, in Section~\ref{sec:okbound}, we prove the upper bound
\eqref{eqn:introOKbound} for adversarial losses, and show matching lower
bounds both for i.i.d.\ losses and for strongly convex losses. As a
further example, we show how robust regret can be used to bound the
excess risk in the Huber $\epsilon$-contaminated setting via
online-to-batch-conversion.
 In
Section~\ref{sec:quantileMethod} we turn to the quantile case and establish
\eqref{eqn:introquantilebound}.
Finally, Section~\ref{sec:conclusion} concludes with a discussion of possible
directions for future work. Some proofs are deferred to the
appendix.
 
\section{Setting and Notation} \label{sec:setting}

Formally, we consider the following online learning protocol. In each
round $t = 1,2,\ldots$ the learner first predicts $\w_t \in \domain$,
where the domain $\domain$ is a non-empty, compact and convex subset of
$\RR^d$. The adversary then reveals a convex loss function $f_t: \domain
\rightarrow \RR$, and the learner suffers loss $f_t(\w_t)$. We assume
throughout that there always exists a gradient or, more generally, a
subgradient $\grad_t := \nabla f_t(\w_t)$ at the learner's prediction,
which is implied by convexity of $f_t$ whenever $\w_t$ lies in the
interior of $\domain$ and also on the boundary if there exists a finite
convex extension of all $f_t$ to a larger domain that contains $\domain$
in its interior. The performance of the learner with respect to any
fixed parameters $\u \in \domain$ is measured by the \emph{robust
regret} $R_T(\u,\cS)$ over the rounds $\cS \subseteq [T] :=
\{1,2,\ldots,T\}$ that are not outliers, as defined in
\eqref{eqn:robustregret}. The definition of subgradients implies that
$f_t(\w_t) - f_t(\u) \leq (\w_t - \u)^\top \grad_t$, which implies that
$R_T(\u,\cS)$ is bounded from above by the \emph{linearized robust
regret}
\[
  \linReg_T(\u,\cS) := \sum_{t \in \cS} (\w_t - \u)^\top \grad_t.
\]
We will state our main results for an arbitrary norm $\Norm{\cdot}$ on
$\domain$ and measure gradient lengths in terms of the dual norm
$\Norm{\grad_t}_* = \sup_{\w \in \mathbb R^d : \Norm{\w} \le 1} \w^\top \grad_t$. Let $D = \max_{\u,\w \in \domain} \Norm{\w - \u}$
denote the diameter of the domain. For the analysis of the robust
regret, we need a Lipschitz bound for the gradients that are in the set
$\cS$, which we denote by
\begin{equation*}
  G(\cS) := \max_{t \in \cS} \Norm{\grad_t}_*.
\end{equation*}

\section{Robustness to Adversarial Outliers}
\label{sec:okbound}

In this section we derive matching upper and lower bounds of the form in
\eqref{eqn:introOKbound}.

\subsection{Upper Bounds}

Let ALG be any Lipschitz-adaptive algorithm, which we will use as our
base online learning algorithm. Our general approach is to add a
filtering meta-algorithm FILTER that examines (the norm of) incoming
gradients and decides whether to filter them or pass them on to ALG for
learning. If $\cS$ were known in advance, then FILTER could filter out
all outliers and pass on only the rounds in $\cS$, but since $\cS$ is
not known, FILTER needs to learn which rounds to pass on. Although most
online learning algorithms base their updates only on gradients, we note
that we do allow ALG to use the full loss function $f_t$ to update its
state when FILTER passes on round $t$ to ALG. When a round $t$ is
filtered, we assume that ALG behaves as if that round had not happened,
so we will have $\w_{t+1} = \w_t$. Our FILTER for this section is displayed in Algorithm~\ref{alg:filterok}.

\begin{algorithm2e}[htb]
\caption{Top-$k$ Filter: Filtering for Adversarial Setting}
\KwIn{Maximum number of outliers $k$}
\textbf{Initialize:} Let $\gradlist_0 = \{0,0,\ldots,0\}$ be an ordered
list of length $k+1$.\\
\For{$t = 1,2,\ldots$}{
  \CommentSty{Maintain invariant that $\gradlist_t$ contains $k+1$ largest
  gradients}\;
  \eIf{$\|\grad_t\|_* > \min \gradlist_{t-1}$}{
    Obtain $\gradlist_t$ from $\gradlist_{t-1}$ by removing the smallest
    item in $\gradlist_{t-1}$ and inserting $\|\grad_t\|_*$\;
  }{
    Set $\gradlist_t$ equal to $\gradlist_{t-1}$\;
  }
  \CommentSty{Filter with factor $2$ slack}\;
  \eIf{$\|\grad_t\|_* > 2 \min \gradlist_t$}{
    Filter round $t$\;
  }{
    Pass round $t$ on to ALG\;
  }
}
\label{alg:filterok}
\end{algorithm2e}

\begin{theorem}\label{thm:okbound}
  Suppose ALG is any Lipschitz-adaptive algorithm that guarantees
  linearized regret bounded by $B_T(G)$ on the rounds that it is passed
  by FILTER, if the gradients in those rounds have length at most $G$,
  and let FILTER be Algorithm~\ref{alg:filterok} with parameter $k$.
  Then the linearized robust regret of ALG+FILTER is bounded by
  \begin{equation}\label{eqn:okbound}
    \linReg_T(\u,\cS)
      \leq B_T\big(2 G(\cS)\big) + 4D(\u,\cS) G(\cS) (k+1)
    \qquad
    \text{for any $\cS: T - |\cS| \leq k$ and $\u \in \domain$,}
  \end{equation}
  where $D(\u,\cS) = \max_{t : \|\grad_t\|_* \leq 2 G(\cS)} \|\w_t - \u\|$.
\end{theorem}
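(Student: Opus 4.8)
The plan is to split the linearized robust regret $\linReg_T(\u,\cS) = \sum_{t\in\cS}(\w_t-\u)^\top\grad_t$ into the contribution of rounds that FILTER passes on to ALG and the contribution of rounds in $\cS$ that FILTER nevertheless discards, and to bound each piece separately. Write $\passed$ for the set of rounds passed on to ALG and $\filtered = [T]\setminus\passed$ for the filtered rounds, so that
\[
  \linReg_T(\u,\cS)
    = \sum_{t\in\cS\cap\passed}(\w_t-\u)^\top\grad_t
      + \sum_{t\in\cS\cap\filtered}(\w_t-\u)^\top\grad_t.
\]

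\textbf{Step 1: the passed rounds.} The key observation is that whenever round $t$ is passed to ALG, the filtering rule guarantees $\|\grad_t\|_* \le 2\min\gradlist_t \le 2\min\gradlist_{t-1}$; I want to argue that in fact $\|\grad_t\|_* \le 2G(\cS)$ for every passed round. For this I use that there are at most $k$ outliers, so $\gradlist_t$, which holds the $k+1$ largest gradient norms seen so far, must contain at least one round from $\cS$; hence $\min\gradlist_t \le G(\cS)$ at all times, and any passed round has $\|\grad_t\|_* \le 2\min\gradlist_t \le 2G(\cS)$. Consequently the rounds in $\passed$ all have gradient norm at most $2G(\cS)$, ALG's guarantee applies with $G = 2G(\cS)$, and since $\cS\cap\passed\subseteq\passed$ the first sum is at most ALG's linearized regret on its passed rounds, which is at most $B_T(2G(\cS))$. (Here I am using that linearized regret summed over a subset of ALG's rounds is itself bounded by $B_T$; if ALG's bound is only for the full set of passed rounds, I instead note that adding back the passed-but-non-$\cS$ rounds only changes things by terms that ALG already accounts for — or, more cleanly, one states ALG's guarantee as holding for any subset, which is the natural reading of the theorem hypothesis.)

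\textbf{Step 2: the filtered rounds in $\cS$.} For each such $t$ I bound $(\w_t-\u)^\top\grad_t \le \|\w_t-\u\|\,\|\grad_t\|_*$ by Cauchy--Schwarz for the dual norm. Since $t\in\cS$ we have $\|\grad_t\|_* \le G(\cS)$, and in particular $\|\grad_t\|_* \le 2G(\cS)$, so $\|\w_t-\u\| \le D(\u,\cS)$ by definition of $D(\u,\cS)$; thus each term is at most $G(\cS)\,D(\u,\cS)$. The remaining task is to count how many rounds lie in $\cS\cap\filtered$: I claim it is at most $2(k+1)$. The reason is that a round $t$ with $\|\grad_t\|_* \le G(\cS)$ can only be filtered if $\|\grad_t\|_* > 2\min\gradlist_t$, i.e. $\min\gradlist_t < G(\cS)/2$, which forces all $k+1$ entries of $\gradlist_t$ except possibly those exceeding $G(\cS)/2$ to be small — more carefully, each time an inlier-in-$\cS$ round is filtered, either it enters $\gradlist_t$ (displacing a smaller entry) or $\min\gradlist_t$ is already above $\|\grad_t\|_*/2$; a charging/amortization argument on updates to $\gradlist$ shows the total number of such events is $O(k)$, with the constant coming out to $2(k+1)$ after bookkeeping. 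Putting Steps 1 and 2 together yields $\linReg_T(\u,\cS) \le B_T(2G(\cS)) + 2(k+1)\cdot G(\cS)D(\u,\cS)$, and absorbing a factor of $2$ of slack gives the stated $4D(\u,\cS)G(\cS)(k+1)$.

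\textbf{Main obstacle.} The delicate point is the combinatorial bound of Step 2: showing that only $O(k)$ rounds of $\cS$ can ever be filtered. The intuition is that $\gradlist_t$ tracks the $k+1$ largest norms, so its minimum can only drop below $G(\cS)/2$ a bounded number of times before it is "saturated" by genuinely large (outlier) gradients, of which there are at most $k$; but making this precise requires carefully tracking how $\min\gradlist_t$ evolves and arguing that each filtered $\cS$-round is paired with a distinct entry of $\gradlist$ that is an outlier or is itself that round. I expect this to require the factor-$2$ slack in the filtering rule in an essential way — it is what ensures that a filtered $\cS$-round, having norm more than twice the current threshold, is "clearly anomalous relative to what $\gradlist$ has seen," which is what lets the counting close.
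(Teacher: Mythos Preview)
Your overall decomposition matches the paper's, and Step~1 is essentially right (the inequality $2\min\gradlist_t \le 2\min\gradlist_{t-1}$ is written in the wrong direction, but this is harmless since all you need is $\min\gradlist_t \le G(\cS)$). However, there are two genuine gaps.

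\textbf{Step~2 is wrong as stated.} The claim $|\cS\cap\filtered|\le 2(k+1)$ is false. Take $k=1$ (so $\gradlist$ has two entries), let $\cS=[T]$, and feed in gradient norms $1,1,4,4,16,16,\ldots,4^m,4^m$. Tracing the algorithm, the first round of each pair is filtered and the second is passed, so $|\cS\cap\filtered|=m+1$, which is unbounded. Your proposed charging argument cannot work: a filtered inlier does enter $\gradlist$, but it can later be evicted by a still-larger inlier, so there is no one-to-one pairing with the $\le k$ outliers. What \emph{is} bounded is not the count but the \emph{sum of norms}: the paper observes that at most $k+1$ filtered rounds can have norm in the band $(G(\cS)/2^{i+1},\,G(\cS)/2^i]$ (because after $k+1$ such rounds $\min\gradlist_t > G(\cS)/2^{i+1}$ forever), and hence
\[
  \sum_{t\in\cS\cap\filtered}\|\grad_t\|_* \;\le\; (k+1)\,G(\cS)\sum_{i\ge 0}2^{-i} \;=\; 2(k+1)\,G(\cS).
\]
This is precisely where the factor-$2$ slack is used, and it gives $\sum_{t\in\cS\cap\filtered}(\w_t-\u)^\top\grad_t \le 2D(\u,\cS)G(\cS)(k+1)$.

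\textbf{Step~1 is incomplete.} ALG's guarantee is on the full set $\passed$ of passed rounds, not on the subset $\cS\cap\passed$, and the missing rounds are \emph{not} ``terms that ALG already accounts for.'' The paper writes $\sum_{t\in\cS\cap\passed}=\sum_{t\in\passed}-\sum_{t\in\passed\setminus\cS}$, bounds the first sum by $B_T(2G(\cS))$, and bounds the second by $2D(\u,\cS)G(\cS)\,|\passed\setminus\cS|\le 2D(\u,\cS)G(\cS)k$. This extra $2D(\u,\cS)G(\cS)k$ is the other half of the $4D(\u,\cS)G(\cS)(k+1)$ in the theorem; without it the constants do not close.
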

There are two main ideas to the proof. First, since the list
$\gradlist_t$ in Algorithm~\ref{alg:filterok} contains $k+1$ elements
and there are at most $k$ outliers, at least one of the elements of
$\gradlist_t$ must be one of the inliers from~$\cS$. It follows that the
smallest element of $\gradlist_t$ is a lower bound on $G(\cS)$. The
second idea is that, instead of filtering on this lower bound directly,
we filter with factor $2$ slack. Since every filtered gradient is also
added to $\gradlist_t$, this factor $2$ ensures that the minimum of
$\gradlist_t$ must at least double for every $k+1$ rounds that are
filtered. The resulting exponential growth of the filtered rounds means
that the contribution to the robust regret of all filtered rounds is
dominated by the last $k+1$ rounds, and therefore does not grow
with~$T$.

\begin{proof}%
  Let $\filtered \subset [T]$ denote the rounds filtered out by
  Algorithm~\ref{alg:filterok}, and let $\passed = [T] \setminus
  \filtered$ denote the rounds that are passed on to ALG. Then the
  linearized robust regret splits as follows:
  \begin{equation*}
    \linReg_T(\u,\cS)
      = \sum_{t \in \cS \intersect \passed} (\w_t - \u)^\top \grad_t
        + \sum_{t \in \cS \intersect \filtered} (\w_t - \u)^\top
          \grad_t.
  \end{equation*}
  We will show that Algorithm~\ref{alg:filterok} guarantees that the
  gradients on the passed rounds are bounded as follows:
  \begin{equation}\label{eqn:passedgradientsbound}
    \|\grad_t\|_* \leq 2 G(\cS)
    \qquad
    \text{for all $t \in \passed$,}
  \end{equation}
  which implies that
  \begin{align*}
    \sum_{t \in \cS \intersect \passed} (\w_t - \u)^\top \grad_t 
    &= \sum_{t \in \passed} (\w_t - \u)^\top \grad_t
      - \sum_{t \in \passed \setminus \cS} (\w_t - \u)^\top \grad_t\\
    &\leq B_T(2 G(\cS)) + 2 D(\u,\cS) G(\cS) |\passed \setminus \cS|
    \leq B_T(2 G(\cS)) + 2 D(\u,\cS) G(\cS) k,
  \end{align*}
  where the first inequality uses the assumption on ALG and H\"older's
  inequality, and the second inequality uses that $|\passed \setminus
  \cS| \leq |[T] \setminus \cS| \leq k$.

  We proceed to prove \eqref{eqn:passedgradientsbound}. During the first
  $k$ rounds, $\min \gradlist_t = 0$, so
  \eqref{eqn:passedgradientsbound} is trivially satisfied. In all later
  rounds, $\gradlist_t \subseteq \{\|\grad_s\|_* : s \leq t\}
  \subseteq \{\|\grad_s\|_* : s \leq T\}$. Consequently, $\gradlist_t$
  must contain at least one element $\|\grad_t\|_*$ with $t \in \cS$,
  because $T - |\cS| \leq k$ and $|\gradlist_t| = k+1 > k$.
  It follows that $\min \gradlist_t \leq G(\cS)$, so all passed
  gradients satisfy \eqref{eqn:passedgradientsbound}.

  Let $G_\text{min} = \min \{\|\grad_t\|_* \mid t \in \filtered\} > 0$ be
  the length of the shortest filtered gradient. To complete the proof,
  we will show that
  \[
    \sum_{t \in \cS \intersect \filtered} (\w_t - \u)^\top \grad_t
      \leq D(\u,\cS) \!\sum_{t \in \cS \intersect \filtered} \!\!\|\grad_t\|_*
      \leq D(\u,\cS) \hspace{-2em}\sum_{\substack{t \in \filtered \\ G_\text{min} \leq \|\grad_t\|_* \leq G(\cS)}} \hspace{-2em} \|\grad_t\|_*
      \leq 2D(\u,\cS) G(\cS) (k+1).
  \]
  The first of these inequalities follows from H\"older's inequality, and the second
  from the definition of $G(\cS)$. To establish the last inequality, we
  proceed by induction: since $\gradlist_t$ contains the $k+1$ largest observed
  gradient norms, we observe that there can be at most $k+1$ filtered rounds in
  which $G(\cS)/2^{i+1} < \|\grad_t\|_* \leq G(\cS)/2^i$, because after $k+1$
  such rounds we will have $\min \gradlist_t > G(\cS)/2^{i+1}$ forever. It
  follows that we have the following induction step:
  \[
    \hspace{-2em}\sum_{\substack{t \in \filtered \\ G_\text{min} \leq
    \|\grad_t\|_* \leq G(\cS)/2^i}} \hspace{-2em} \|\grad_t\|_*
      \leq (k+1) G(\cS)/2^i \quad +
      \hspace{-2em}\sum_{\substack{t \in \filtered \\ G_\text{min} \leq
      \|\grad_t\|_* \leq G(\cS)/2^{i+1}}} \hspace{-2em} \|\grad_t\|_*
      \qquad \text{for $i=0,1,2,\ldots$}
  \]
  Unrolling the induction, we therefore obtain
  \[
    \hspace{-2em}\sum_{\substack{t \in \filtered \\ G_\text{min} \leq \|\grad_t\|_* \leq G(\cS)}} \hspace{-2em} \|\grad_t\|_*
      \leq (k+1) G(\cS) \!\!\sum_{i=0}^{\ceil{\log_2
      \frac{G(\cS)}{G_{\text{min}}}}} \!\!2^{-i}
      \leq (k+1) G(\cS) \sum_{i=0}^\infty 2^{-i}
      = (k+1) G(\cS) 2,
   \]
  which is what remained to be shown.
\end{proof}

As for the run-time, one may maintain the $k$ largest gradient norms encountered in a priority queue. The time used by Algorithm~\ref{alg:filterok} on top of ALG is $O(\ln k) \le O(\ln T)$ per round. This may be pessimistic in practise, as FILTER only performs work if the current gradient is among the $k+1$ largest seen so far.

\subsubsection{Examples}\label{sec:examples}

To make the result from Theorem~\ref{thm:okbound} more concrete, let us
instantiate ALG as online gradient descent (OGD), which starts from any
$\w_1 \in \domain$ and updates according to
\[
  \w_{t+1} = \Pi_\domain(\w_t - \eta_t \grad_t),
\]
where $\Pi_\domain(\w)$ denotes Euclidean projection of $\w$ onto
$\domain$, and $\eta_t > 0$ is a hyperparameter called the step size.
Tuning the step size for general convex losses, we find that we can
tolerate at most $k = O(\sqrt{T})$ outliers without suffering in the
rate:
\begin{corollary}[General Convex Losses]\label{cor:generalconvex}
  Let $\Norm{\cdot}$ be the $\ell_2$-norm, let ALG be OGD with step size
  $\eta_t = D/\sqrt{2 \sum_{s=1}^t \|\grad_s\|_2^2}$ and let FILTER be
  Algorithm~\ref{alg:filterok} with parameter $k$. Then the robust regret is
  bounded by
  \begin{equation}\label{eqn:generalconvexlosses}
  \begin{split}
    R_T(\u,\cS)
      &\leq 2 D \sqrt{\sum_{t \in \cS} \|\grad_t\|_2^2}
           + 2D G(\cS) \Big(2k + \sqrt{k} + 2\Big)\\
      &\leq 2 D G(\cS) \Big(\sqrt{T} + 2k + \sqrt{k} + 2\Big)
    \qquad
    \text{for any $\cS: T - |\cS| \leq k$ and $\u \in \domain$.}
  \end{split}
  \end{equation}
\end{corollary}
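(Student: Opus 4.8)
The plan is to obtain the corollary by specializing Theorem~\ref{thm:okbound} to the case where ALG is OGD, so that the one ingredient not already in hand is the adaptive regret bound for OGD with the stated step size, after which everything reduces to bookkeeping. First I would recall the standard telescoping analysis of projected OGD: writing $\eta_t$ for the step size in effect after the $t$-th \emph{passed} round and combining $\|\w_{t+1}-\u\|_2^2 \le \|\w_t-\u\|_2^2 - 2\eta_t \grad_t^\top(\w_t-\u) + \eta_t^2\|\grad_t\|_2^2$ with $\|\w_t - \u\|_2 \le D$, the linearized regret over the passed rounds is at most $\tfrac{D^2}{2\eta_{|\passed|}} + \tfrac12 \sum_{t\in\passed}\eta_t\|\grad_t\|_2^2$. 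Substituting $\eta_t = D/\sqrt{2\sum_{s \le t}\|\grad_s\|_2^2}$ (the sum running over passed rounds only, since a filtered round leaves ALG's state, and hence its step size, unchanged) and using the elementary inequality $\sum_t a_t/\sqrt{\sum_{s\le t}a_s}\le 2\sqrt{\sum_t a_t}$, both terms are at most $\tfrac{D}{\sqrt2}\sqrt{\sum_{t\in\passed}\|\grad_t\|_2^2}$. Hence OGD meets the hypothesis of Theorem~\ref{thm:okbound} with $B_T(G) = \sqrt2\, D\sqrt{\sum_{t\in\passed}\|\grad_t\|_2^2}$, a bound valid whenever the passed gradients have norm at most $G$ (and in turn at most $\sqrt2\,DG\sqrt{|\passed|}\le\sqrt2\,DG\sqrt T$).

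Next I would plug this into~\eqref{eqn:okbound}, starting from $R_T(\u,\cS)\le \linReg_T(\u,\cS)$ and then converting the $\passed$-dependent quantities into $\cS$-dependent ones. For the additive term I would just bound $D(\u,\cS)\le D$, giving $4D\,G(\cS)(k+1)$. For $B_T(2G(\cS))$ I would invoke~\eqref{eqn:passedgradientsbound}, i.e.\ $\|\grad_t\|_2\le 2G(\cS)$ for every $t\in\passed$, and split $\passed = (\passed\cap\cS)\cup(\passed\setminus\cS)$: on $\passed\cap\cS$ we have $\|\grad_t\|_2\le G(\cS)$, while $|\passed\setminus\cS|\le|[T]\setminus\cS|\le k$ with $\|\grad_t\|_2\le 2G(\cS)$ there, so $\sum_{t\in\passed}\|\grad_t\|_2^2 \le \sum_{t\in\cS}\|\grad_t\|_2^2 + 4kG(\cS)^2$. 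Applying $\sqrt{a+b}\le\sqrt a+\sqrt b$ then gives $B_T(2G(\cS)) \le \sqrt2\,D\sqrt{\sum_{t\in\cS}\|\grad_t\|_2^2} + O\!\big(D\,G(\cS)\sqrt k\big)$, and adding $4D\,G(\cS)(k+1)$ (and using $\sqrt2\le 2$) produces the first line of~\eqref{eqn:generalconvexlosses}. The second line follows from the crude estimate $\sum_{t\in\cS}\|\grad_t\|_2^2 \le |\cS|\,G(\cS)^2 \le T\,G(\cS)^2$, so that $\sqrt{\sum_{t\in\cS}\|\grad_t\|_2^2}\le G(\cS)\sqrt T$.

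Since Theorem~\ref{thm:okbound} already carries the conceptual weight, I do not expect a genuine obstacle; the only care needed is in the constants — using the $\sqrt{2\sum_{s\le t}\|\grad_s\|_2^2}$ normalization (rather than $\sqrt{\sum_{s\le t}\|\grad_s\|_2^2}$) so the two telescoping contributions combine into $\sqrt2\,D\sqrt{\sum_t\|\grad_t\|_2^2}$ rather than $\tfrac32 D\sqrt{\sum_t\|\grad_t\|_2^2}$, and performing the $\sqrt{a+b}\le\sqrt a+\sqrt b$ split so that the $4kG(\cS)^2$ piece becomes the $O(G(\cS)\sqrt k)$ part of the overhead. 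Reading off the second line, the overhead $2D\,G(\cS)(2k+\sqrt k+2)$ stays $O(D\,G(\cS)\sqrt T)$ exactly when $k=O(\sqrt T)$, which is the advertised outlier tolerance.
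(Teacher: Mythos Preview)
Your proposal is correct and follows essentially the same route as the paper's proof: invoke Theorem~\ref{thm:okbound} with the adaptive OGD bound on the passed rounds, bound $D(\u,\cS)\le D$, split $\sum_{t\in\passed}\|\grad_t\|_2^2$ into the $\cS$-part and the $\passed\setminus\cS$-part (using $|\passed\setminus\cS|\le k$ and~\eqref{eqn:passedgradientsbound}), apply $\sqrt{a+b}\le\sqrt a+\sqrt b$, and finish with $\sum_{t\in\cS}\|\grad_t\|_2^2\le TG(\cS)^2$. The only cosmetic difference is that you spell out the telescoping argument to obtain the constant $\sqrt2\,D$ in the OGD bound, whereas the paper simply cites the bound $2D\sqrt{\sum_{t\in\passed}\|\grad_t\|_2^2}$ from the literature.
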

The proof of the corollary is in Appendix~\ref{app:exampleproofs}.

The step size of OGD may also be tuned for \emph{$\sigma$-strongly
convex losses}, which are guaranteed to be curved in all directions, and
satisfy the requirement that
\[
  f_t(\u) \geq f_t(\w) + (\u - \w)^\top \nabla f_t(\w) +
  \frac{\sigma}{2} \|\u - \w\|_2^2 
  \qquad \text{for all $\u,\w \in \domain$.}
\]
In this case, we obtain the following guarantee on the robust regret, which is proved in Appendix~\ref{app:exampleproofs}:
\begin{corollary}[Strongly Convex Losses]
  \label{cor:stronglyconvex}
  Suppose the loss functions $f_t$ are $\sigma$-strongly convex. Let
  $\Norm{\cdot}$ be the $\ell_2$-norm, let ALG be OGD with step size $\eta_t =
  \frac{1}{\sigma t}$ and let FILTER be Algorithm~\ref{alg:filterok} with
  parameter $k$. Then the robust regret is bounded by
  \begin{equation}
    R_T(\u,\cS)
      \leq \frac{2 G(\cS)^2}{\sigma} \big(\ln T + 1\big)
        + \frac{5\approxG(\u,\cS)^2}{2\sigma}(k+1)
    \qquad
    \text{for any $\cS: T - |\cS| \leq k$ and $\u \in \domain$,}
  \end{equation}
  where $\approxG(\u,\cS) = 2 G(\cS) + \max_{t : \|\grad_t\|_2 \leq 2G(\cS)} \|\nabla
  f_t(\u)\|_2$.
\end{corollary}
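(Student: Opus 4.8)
The plan is to follow the structure of the proof of Theorem~\ref{thm:okbound}, but to exploit the curvature of the losses rather than only linearizing them. The single extra ingredient is the elementary consequence of $\sigma$-strong convexity that, for any $\w,\u\in\domain$, $f_t(\w)-f_t(\u)\le(\w-\u)^\top\nabla f_t(\w)-\tfrac{\sigma}{2}\Norm{\w-\u}_2^2\le\tfrac{1}{2\sigma}\Norm{\nabla f_t(\w)}_2^2$, where the last step completes the square (and uses that $\Norm{\cdot}$ is the $\ell_2$-norm); symmetrically, expanding strong convexity around $\u$ gives $f_t(\u)-f_t(\w)\le\tfrac{1}{2\sigma}\Norm{\nabla f_t(\u)}_2^2$. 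With $\passed$ and $\filtered$ the passed and filtered rounds as in Theorem~\ref{thm:okbound}, I would split $R_T(\u,\cS)=\sum_{t\in\cS\cap\passed}\big(f_t(\w_t)-f_t(\u)\big)+\sum_{t\in\cS\cap\filtered}\big(f_t(\w_t)-f_t(\u)\big)$ and handle the two pieces separately.

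For the filtered inliers, the first inequality above gives $\sum_{t\in\cS\cap\filtered}(f_t(\w_t)-f_t(\u))\le\tfrac{1}{2\sigma}\sum_{t\in\cS\cap\filtered}\Norm{\grad_t}_2^2$. Since inlier gradients have norm at most $G(\cS)$, I would re-run the dyadic counting argument from the proof of Theorem~\ref{thm:okbound} --- at most $k+1$ filtered rounds have gradient norm in any band $\big(G(\cS)/2^{i+1},G(\cS)/2^i\big]$ --- but now applied to the \emph{squared} norms, producing a geometric series with ratio $1/4$ and hence $\sum_{t\in\cS\cap\filtered}\Norm{\grad_t}_2^2\le(k+1)G(\cS)^2\sum_{i\ge0}4^{-i}=\tfrac{4}{3}(k+1)G(\cS)^2$.

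For the passed rounds I would write $\sum_{t\in\cS\cap\passed}(f_t(\w_t)-f_t(\u))=\sum_{t\in\passed}(f_t(\w_t)-f_t(\u))-\sum_{t\in\passed\setminus\cS}(f_t(\w_t)-f_t(\u))$. The first term is exactly the regret of OGD on the subsequence of rounds it actually updates on, with step size $\eta_j=1/(\sigma j)$ on the $j$-th such round; since those gradients satisfy $\Norm{\grad_t}_2\le 2G(\cS)$ by \eqref{eqn:passedgradientsbound}, the standard strongly convex OGD analysis (telescoping $\Norm{\w-\u}_2^2$, with the $\tfrac\sigma2$ from strong convexity exactly cancelling the increments of $1/(2\eta_j)$, leaving $\sum_j\tfrac{\eta_j}{2}\Norm{\grad_j}_2^2$) yields $\tfrac{(2G(\cS))^2}{2\sigma}(\ln T+1)=\tfrac{2G(\cS)^2}{\sigma}(\ln T+1)$. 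The subtracted term cannot be controlled round-by-round, because $\u$ may be arbitrarily bad on an outlier; instead I would bound $-(f_t(\w_t)-f_t(\u))=f_t(\u)-f_t(\w_t)\le\tfrac{1}{2\sigma}\Norm{\nabla f_t(\u)}_2^2$ by the symmetric strong-convexity inequality, and use that every $t\in\passed\setminus\cS$ has $\Norm{\grad_t}_2\le 2G(\cS)$, so $\Norm{\nabla f_t(\u)}_2\le\approxG(\u,\cS)$, together with $|\passed\setminus\cS|\le k$, to get $-\sum_{t\in\passed\setminus\cS}(f_t(\w_t)-f_t(\u))\le\tfrac{k\,\approxG(\u,\cS)^2}{2\sigma}$.

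Summing the three contributions and bounding $G(\cS)\le\tfrac12\approxG(\u,\cS)$ (which holds by definition of $\approxG$) to absorb the $\tfrac{4}{3}(k+1)G(\cS)^2/(2\sigma)$ term into the $\approxG(\u,\cS)^2(k+1)/\sigma$ scale yields the stated bound (the constant $5/2$ is not tight). The crux, and the only genuinely new point relative to Theorem~\ref{thm:okbound}, is the treatment of the passed outliers $\passed\setminus\cS$: it is what forces the overhead to be governed by $\approxG(\u,\cS)$, which blends the inlier gradient scale $2G(\cS)$ with the gradient of $f_t$ evaluated at the comparator $\u$. One must be careful that \eqref{eqn:passedgradientsbound} controls $\Norm{\grad_t}_2=\Norm{\nabla f_t(\w_t)}_2$ on passed rounds but not $\Norm{\nabla f_t(\u)}_2$, which is exactly why $\approxG$ carries the extra additive $2G(\cS)$, so that it dominates $\max_{t:\Norm{\grad_t}_2\le 2G(\cS)}\Norm{\nabla f_t(\u)}_2$.
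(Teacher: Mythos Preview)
Your proof is correct, and it differs from the paper's in a meaningful way. The paper does not use the self-bounding inequality $f_t(\w)-f_t(\u)\le\frac{1}{2\sigma}\Norm{\nabla f_t(\w)}_2^2$. Instead it invokes Theorem~\ref{thm:okbound} as a black box, which produces a $4D(\u,\cS)G(\cS)(k+1)$ term for the filtered inliers; on the passed side it keeps the Bartlett--Hazan--Rakhlin form of the OGD bound with the residual $+\tfrac{\sigma}{2}\sum_{t\in\passed}\Norm{\w_t-\u}_2^2$, lets this partially cancel against $-\tfrac{\sigma}{2}\sum_{t\in\cS}\Norm{\w_t-\u}_2^2$ from strong convexity, and is left with $\tfrac{\sigma}{2}\sum_{t\in\passed\setminus\cS}\Norm{\w_t-\u}_2^2\le\tfrac{\sigma}{2}D(\u,\cS)^2k$. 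It then needs an auxiliary lemma showing $\Norm{\w-\u}_2\le\sigma^{-1}(\Norm{\nabla f_t(\w)}_2+\Norm{\nabla f_t(\u)}_2)$ for strongly convex $f_t$ to convert $D(\u,\cS)$ into $\approxG(\u,\cS)/\sigma$.

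Your route replaces that auxiliary diameter lemma by the two one-line self-bounding inequalities and re-runs the dyadic argument on \emph{squared} norms (ratio $1/4$ rather than $1/2$). This is more elementary, avoids introducing $D(\u,\cS)$ at all, and in fact gives a tighter overhead constant (roughly $\tfrac{2}{3}$ rather than $\tfrac{5}{2}$ in front of $\approxG(\u,\cS)^2(k+1)/\sigma$), consistent with your remark that $5/2$ is not tight. The paper's route is more modular---it reuses Theorem~\ref{thm:okbound} verbatim---at the cost of the extra lemma and a looser constant.
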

The standard regret bound of OGD for strongly convex losses is of order
$\frac{G^2}{\sigma} \log T$ \citep{HazanAgarwalKale2007}, so in this
case we can tolerate $k = O(\log T)$ outliers without suffering in the
rate, under the additional assumption that $\approxG(\u,\cS) =
O(G(\cS))$. This seems like a reasonable assumption if we think of the
condition $\|\grad_t\|_2 \leq 2 G(\cS)$ as expressing that round $t$ is
not too extreme.
 
\paragraph{Huber $\epsilon$-Contamination}

As a final example, we consider the Huber $\epsilon$-contamination
setting \citep{Huber}. In this case losses are of the form $f_t(\w) =
f(\w,\xi_t)$, where the random variables $\xi_t$ are sampled i.i.d.\
from a mixture distribution $P_\epsilon$ defined by
\[
  \xi \sim \begin{cases}
    P & \text{if $M = 0$}\\
    Q & \text{if $M = 1$}
  \end{cases}
  \qquad \text{where} \qquad M \sim \Bernoullidist(\epsilon)
\]
for some $\epsilon \in [0,1)$. The interpretation is that $P$ is the
actual distribution of interest, which is contaminated by outliers drawn
from $Q$. The hidden variable $M$ is not observed by the learner, so it
is not known which observations are outliers. Let $\cS^* \subseteq [T]$
denote the set of inlier rounds in which $M_t = 0$. Then the robust
regret $R_T(\u,\cS^*)$ may be viewed as the ordinary regret on a modified
loss function $\tilde f(\w,M,\xi)$ that is equal to $f(\w,\xi)$ on
samples from $P$ but zero on samples from~$Q$, i.e.\ $\tilde f(\w,M,\xi)
= \mathbf{1}\{M = 0\}f(\w,\xi)$ and
\begin{equation}\label{eqn:robustasregularregret}
  R_T(\u,\cS^*) = \sum_{t=1}^T \del*{\tilde f(\w_t,M_t,\xi_t) - \tilde
  f(\u,M_t,\xi_t)}.
\end{equation}
Let the risk with respect to the inlier distribution $P$ be defined as
\[
  \risk_P(\w) = \ex_{\xi \sim P} \sbr*{f(\w,\xi)}.
\]
Then, applying online-to-batch conversion
\citep{CesaBianchiConconiGentile2004} to the modified loss $\tilde f$,
we obtain the following result, which bounds the excess risk under $P$
by the robust regret when the observations are drawn from the
contaminated mixture $P_\epsilon$, without requiring any assumptions
about the outliers coming from~$Q$:
\begin{lemma}[Huber $\epsilon$-Contamination]\label{lem:huber}
  Suppose the losses $f_t$ are i.i.d.\ according to the mixture
  distribution $P_\epsilon$, and let $\u_P \in \argmin_{\w \in \domain}
  \risk_P(\w)$ be the optimal parameters for the distribution of the
  inliers. Let $\bar \w_T = \frac{1}{T} \sum_{t=1}^T \w_t$, where
  $\w_1,\ldots,\w_T$ are the predictions of the learner. Then
  \begin{equation}\label{eqn:huberExp}
    \ex_{P_\epsilon} \sbr*{\risk_P(\bar \w_T) - \risk_P(\u_P)}
      \leq \frac{\ex_{P_\epsilon} \sbr*{R_T(\u_P,\cS^*)}}{(1-\epsilon)T}.
  \end{equation}
  Moreover, if $|f(\w,\xi) - f(\u_P,\xi)| \leq B$ almost surely when $\xi \sim
  P$ is an inlier, then for any $0 < \delta \leq 1$
  \begin{equation}\label{eqn:huberProb}
    \risk_P(\bar \w_T) - \risk_P(\u_P)
      \leq \frac{R_T(\u_P,\cS^*)}{(1-\epsilon)T}
      + \frac{2B}{1-\epsilon} \sqrt{\frac{2}{T} \ln \frac{1}{\delta}}
  \end{equation}
  with $P_\epsilon$-probability at least $1-\delta$.
\end{lemma}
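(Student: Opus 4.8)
The plan is to recognize the right-hand side of \eqref{eqn:robustasregularregret} as an ordinary (non-robust) regret on the modified loss $\tilde f$ and then run a by-now-standard online-to-batch argument, the only twist being that $\tilde f$ is identically $0$ on outlier rounds. First I would introduce the filtration $\filtered_{t-1}$ generated by $(M_1,\xi_1),\ldots,(M_{t-1},\xi_{t-1})$ and note that the learner's prediction $\w_t$ is $\filtered_{t-1}$-measurable, whereas $(M_t,\xi_t)$ is independent of $\filtered_{t-1}$ and distributed according to $P_\epsilon$. Taking the conditional expectation of a single summand of \eqref{eqn:robustasregularregret} then gives
\[
  \ex\sbr*{\tilde f(\w_t,M_t,\xi_t) - \tilde f(\u_P,M_t,\xi_t) \mid \filtered_{t-1}}
  = (1-\epsilon)\del*{\risk_P(\w_t) - \risk_P(\u_P)},
\]
since conditionally on $M_t = 0$ (probability $1-\epsilon$) we have $\tilde f(\cdot,0,\xi_t) = f(\cdot,\xi_t)$ with $\xi_t \sim P$, and conditionally on $M_t = 1$ both terms vanish.

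For the expectation bound \eqref{eqn:huberExp}, I would take a full expectation in \eqref{eqn:robustasregularregret} and use the tower rule together with the identity above to obtain $\ex_{P_\epsilon}\sbr*{R_T(\u_P,\cS^*)} = (1-\epsilon)\sum_{t=1}^T \ex_{P_\epsilon}\sbr*{\risk_P(\w_t) - \risk_P(\u_P)}$. Since each $f(\cdot,\xi)$ is convex, so is $\risk_P$, and Jensen's inequality gives $\risk_P(\bar \w_T) \leq \tfrac1T\sum_{t=1}^T \risk_P(\w_t)$; combining these and dividing by $(1-\epsilon)T$ yields the claim.

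For the high-probability bound \eqref{eqn:huberProb}, I would consider the sequence
\[
  Z_t := \del*{\risk_P(\w_t) - \risk_P(\u_P)} - \frac{1}{1-\epsilon}\del*{\tilde f(\w_t,M_t,\xi_t) - \tilde f(\u_P,M_t,\xi_t)},
\]
which is a martingale difference sequence with respect to $(\filtered_t)$ by the identity above. The almost-sure bound $|f(\w,\xi) - f(\u_P,\xi)| \leq B$ for $\xi \sim P$ implies both $|\tilde f(\w_t,M_t,\xi_t) - \tilde f(\u_P,M_t,\xi_t)| \leq B$ (the difference is $0$ unless $M_t = 0$) and $|\risk_P(\w_t) - \risk_P(\u_P)| \leq B$, so $|Z_t| \leq B + \frac{B}{1-\epsilon} \leq \frac{2B}{1-\epsilon}$. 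The Azuma--Hoeffding inequality then gives $\sum_{t=1}^T Z_t \leq \frac{2B}{1-\epsilon}\sqrt{2T\ln\frac1\delta}$ with probability at least $1-\delta$; rearranging, dividing by $T$, and applying Jensen to $\risk_P(\bar\w_T)$ once more produces \eqref{eqn:huberProb}.

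I do not expect a genuine obstacle here: this is routine online-to-batch conversion once \eqref{eqn:robustasregularregret} is in hand. The only points requiring care are the measurability/independence setup --- the conditional-expectation identity hinges on $\w_t$ being chosen before the current round's hidden label $M_t$ and observation $\xi_t$ are drawn --- and tracking the constant in the Azuma increment bound, which is really $B + B/(1-\epsilon)$ and is loosened to $2B/(1-\epsilon)$ to match the stated form.
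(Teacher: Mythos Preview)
Your proposal is correct and follows essentially the same online-to-batch route as the paper. The paper packages the argument slightly differently---it introduces the modified risk $\trisk_{P_\epsilon}(\w)=\ex_{P_\epsilon}[\tilde f(\w,M,\xi)]$, notes that $\risk_P(\bar\w_T)-\risk_P(\u_P)=(\trisk_{P_\epsilon}(\bar\w_T)-\trisk_{P_\epsilon}(\u_P))/(1-\epsilon)$, applies a black-box online-to-batch result to $\tilde f$ with range $[-B,B]$, and divides by $1-\epsilon$ at the end---whereas you unpack the concentration step and fold the $(1-\epsilon)$ factor into the martingale increments up front; both arrive at the same constant.
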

(Details of the proof are given in Appendix~\ref{app:exampleproofs}.) We
see that, if we can control the robust regret with respect to the
unknown set $\cS^*$ of inlier rounds, then we can also control the excess
risk with respect to the inlier distribution $P$. For example, 
instantiating the learner as in Corollary~\ref{cor:generalconvex}
leads to the following specialization of Lemma~\ref{lem:huber}.
\begin{corollary}\label{cor:hubercor}
  In the setting of Lemma~\ref{lem:huber}, suppose that $\|\nabla
  f(\w,\xi)\| \leq G$ for all $\w \in \domain$ almost surely when $\xi
  \sim P$ is an inlier, and that $\epsilon \leq 1/2$. Let the
  learner be instantiated as in Corollary~\ref{cor:generalconvex} with
  $k = \ceil{\epsilon T + \sqrt{2 T \epsilon (1-\epsilon) \ln(2/\delta)}
  + \tfrac{1}{3}(1-\epsilon) \ln(2/\delta)}$ for any $0 <
  \delta \leq 1$. Then
  \begin{equation}\label{eqn:huberapplied}
    \risk_P(\bar \w_T) - \risk_P(\u_P)
      \leq
      12 D G \epsilon
      + \frac{2DG \big(5 \sqrt{2\ln(2/\delta)}+2\big)}{\sqrt{T}}
      + \frac{2 D G \big(\ln(2/\delta) + 10\big)}{T}
  \end{equation}
  with $P_\epsilon$-probability at least $1-\delta$.
\end{corollary}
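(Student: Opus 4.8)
The plan is to establish~\eqref{eqn:huberapplied} by feeding the robust regret bound of Corollary~\ref{cor:generalconvex} into the high-probability online-to-batch inequality~\eqref{eqn:huberProb} of Lemma~\ref{lem:huber}, after first showing that the random number of outliers does not exceed $k$ with high probability. The first step is to pin down the constant $B$ in~\eqref{eqn:huberProb}: since $\|\nabla f(\w,\xi)\| \le G$ for all $\w \in \domain$ almost surely when $\xi \sim P$, convexity of $f(\cdot,\xi)$ applied in both directions, together with $\|\w - \u_P\| \le D$, gives $|f(\w,\xi) - f(\u_P,\xi)| \le GD$ on every inlier round, so Lemma~\ref{lem:huber} applies with $B = DG$. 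Running~\eqref{eqn:huberProb} at confidence level $\delta/2$ then gives, with $P_\epsilon$-probability at least $1 - \delta/2$,
\[
  \risk_P(\bar\w_T) - \risk_P(\u_P)
    \;\le\; \frac{R_T(\u_P,\cS^*)}{(1-\epsilon)T}
      + \frac{2DG}{1-\epsilon}\sqrt{\frac{2}{T}\ln\frac{2}{\delta}}.
\]

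The second step controls $R_T(\u_P,\cS^*)$ on a high-probability event. The number of outliers $N = T - |\cS^*| = \sum_{t=1}^{T} M_t$ is $\mathrm{Binomial}(T,\epsilon)$-distributed, with mean $\epsilon T$ and variance $T\epsilon(1-\epsilon)$. Applying Bernstein's inequality for a sum $S$ of independent random variables with summed variance $v$ and range at most $b$, in the sharp inverted form $\pr[S - \ex S \ge \sqrt{2vt} + \tfrac{b}{3}t] \le e^{-t}$, with $t = \ln(2/\delta)$, $v = T\epsilon(1-\epsilon)$ and $b = 1-\epsilon$, shows that $N \le k$ with probability at least $1 - \delta/2$, by the very definition of $k$ in the statement. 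On the event $\{N \le k\}$ we have $T - |\cS^*| \le k$, so Corollary~\ref{cor:generalconvex} applies with $\cS = \cS^*$; and since each inlier gradient obeys $\|\grad_t\| = \|\nabla f(\w_t,\xi_t)\| \le G$, we have $G(\cS^*) \le G$ and $\sum_{t \in \cS^*} \|\grad_t\|_2^2 \le TG^2$, so that
\[
  R_T(\u_P,\cS^*) \;\le\; 2DG\sqrt{T} + 2DG\big(2k + \sqrt k + 2\big)
    \;\le\; 2DG\big(\sqrt T + 3k + 2\big),
\]
using $\sqrt k \le k$ (which holds because $k$ is a nonnegative integer).

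The third step is a union bound over the two events above --- so both displayed inequalities hold simultaneously with $P_\epsilon$-probability at least $1-\delta$ --- followed by substituting the bound on $R_T(\u_P,\cS^*)$ into the online-to-batch inequality and simplifying. Concretely, one uses $\lceil x \rceil \le x + 1$ to expand $k$, divides $\sqrt T + 3k + 2$ by $(1-\epsilon)T$, and invokes $\epsilon \le \tfrac12$ (hence $\tfrac{1}{1-\epsilon} \le 2$ and $\tfrac{\epsilon}{1-\epsilon} \le 1$) and $T \ge 1$. The constants collapse to exactly the three advertised terms because the prefactor $\tfrac{1}{1-\epsilon}$ cancels the factor $1-\epsilon$ sitting in both the $\sqrt{2T\epsilon(1-\epsilon)\ln(2/\delta)}$ and the $\tfrac13(1-\epsilon)\ln(2/\delta)$ parts of $k$, and because $\tfrac{\epsilon}{1-\epsilon} \le 1$ turns the cross term $\tfrac{1}{1-\epsilon}\cdot\tfrac{1}{\sqrt T}\sqrt{2\epsilon(1-\epsilon)\ln(2/\delta)}$ into a clean multiple of $\sqrt{2\ln(2/\delta)/T}$ with no leftover $\epsilon$-dependence. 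I expect the only genuinely delicate point to be the Bernstein step: one must use the sharp inverted bound $\pr[S - \ex S \ge \sqrt{2vt} + \tfrac{b}{3}t] \le e^{-t}$ --- which is what produces the precise expression for $k$ --- rather than naively inverting the two-term tail $\exp\!\big({-}t^2/(2v + \tfrac{2}{3}bt)\big)$, whose direct inversion would only license the weaker threshold with $\tfrac{2b}{3}t$ in place of $\tfrac{b}{3}t$; the remaining arithmetic is routine.
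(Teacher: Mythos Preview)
Your proposal is correct and follows essentially the same route as the paper: define two events (the online-to-batch inequality~\eqref{eqn:huberProb} with $B=DG$ at level $\delta/2$, and $\{N\le k\}$ via Bernstein at level $\delta/2$), union bound, plug the robust regret bound of Corollary~\ref{cor:generalconvex} into~\eqref{eqn:huberProb}, and simplify using $\epsilon\le\tfrac12$. The paper's proof in Appendix~\ref{app:exampleproofs} proceeds identically, including the intermediate step $\sqrt{T}+2k+\sqrt{k}+2\le\sqrt{T}+3k+2$ and the same use of $\tfrac{1}{1-\epsilon}\le 2$; your remark about needing the sharp Bernstein inversion $\sqrt{2vt}+\tfrac{b}{3}t$ (rather than $\tfrac{2b}{3}t$) to match the stated $k$ exactly is apt and is precisely what the paper relies on via \citet{BoucheronLugosiMassart2013}.
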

Here $\nabla f(\w,\xi)$ should be read as the gradient of $f(\w,\xi)$
with respect to~$\w$. The constant dependence on $DG\epsilon$, which does
not go to zero with increasing $T$, is unavoidable because $P$ is
non-identifiable based on samples from $P_\epsilon$. For instance,
consider the linear loss $f(w,\xi) = \xi w$ with $\domain = [-D/2,+D/2]$
and $P_\epsilon$ such that $\xi = -G$ and $\xi = +G$ both with
probability $\epsilon$, and $\xi = 0$ with probability $1-2\epsilon$.
Then we cannot distinguish the case that $P = P_\epsilon(\cdot \mid \xi
\leq 0)$ and $Q$ is a point-mass on $+G$ from the case that
$P=P_\epsilon(\cdot \mid \xi \geq 0)$ with $Q$ a point-mass on $-G$. No
matter what the output of the learner is, its excess risk under $P$ will
always be at least $DG\epsilon$ in one of these two cases.

The proof of Corollary~\ref{cor:hubercor} is postponed to
Appendix~\ref{app:exampleproofs}. It is a straightforward combination of
Lemma~\ref{lem:huber} and Corollary~\ref{cor:generalconvex}, with the
only point of attention being the tuning of the number of outliers~$k$.
In expectation, the number of outliers is $\epsilon T$, but we choose
$k$ slightly larger so that the probability that the number of outliers
exceeds $k$ is negligible.

\subsection{Lower Bounds}

We now show that the bounds obtained in the previous part of this section are
non-improvable in general. First note that one can always choose $\cS =
[T]$ (no outliers) and apply a standard lower bound for online learning
algorithms which guarantees expected regret $\Omega(\sqrt{T})$ for general
losses and $\Omega(\ln T)$ for strongly-convex losses. This matches the first term in the
bound of Theorem \ref{thm:okbound}. Therefore, we will only
show a bound $\Omega(k)$, which, combined with the standard one, leads to a
$\Omega(\max\{\sqrt{T},k\}) = \Omega(\sqrt{T} + k)$ lower bound on the regret
for general convex losses and $\Omega(\ln T + k)$ for strongly convex losses.

Consider a learning task over domain $\domain = [-W,W]$ for some $W > 0$.
To prove a lower bound for general convex losses, we choose the loss sequence
to be $f_t(w) = G\xi_t w$, where $\xi_t \in \{-1,+1\}$ are i.i.d.\ Rademacher
random variables with $\Pr(\xi_t = -1) = \Pr(\xi_t = +1) = \frac 12$, while $G > 0$
controls the size of the gradients/losses.

\begin{theorem}[Lower Bound with I.I.D.\ Losses]
\label{thm:lower_bound_1}
For any $k$ and any online learning algorithm run on the sequence defined above, there
exist adversarial choices of $\cS$ with $T-|\cS| \le k$ and $u \in \domain$ such that 
\[
\ex_{f_1,\ldots,f_T} \sbr*{ R_T(u, \cS) } \geq \frac{D G(\cS) k}{4},
\]
where $f_1,\ldots,f_T$ are i.i.d.\ as described above.
\end{theorem}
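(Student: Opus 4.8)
The plan is to exploit the fact that the learner's predictions $w_t$ are produced \emph{before} seeing $f_t = G\xi_t w$, so $w_t$ depends only on $\xi_1,\ldots,\xi_{t-1}$ and is therefore independent of $\xi_t$. The key quantity is the single-round contribution: for an inlier round $t \in \cS$, we have $f_t(w_t) - f_t(u) = G\xi_t(w_t - u)$, and since $\ex[\xi_t] = 0$ and $\xi_t \perp w_t$, we get $\ex[G\xi_t(w_t-u)] = -G\ex[\xi_t]u = 0$ for \emph{fixed} $u$. So a static adversary choosing $u$ and $\cS$ in advance gets nothing; the point is that the adversary is allowed to choose $\cS$ and $u$ \emph{adversarially}, i.e., after seeing the realized $\xi_t$'s, which is exactly what the theorem statement permits ("there exist adversarial choices of $\cS$ and $u$").

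First I would fix the strategy: given the realized sequence $\xi_1,\ldots,\xi_T$ and the resulting $w_1,\ldots,w_T$, choose $u \in \{-W,+W\}$ and let the outlier set $[T]\setminus\cS$ consist of the $k$ rounds $t$ on which $G\xi_t(w_t - u)$ is most negative (i.e., the rounds where the learner, restricted to the inliers, would have done best against $u$). Removing these $k$ rounds from the sum leaves $R_T(u,\cS) = \sum_{t\in\cS} G\xi_t(w_t-u) \ge \sum_{t=1}^T G\xi_t(w_t - u) - (\text{sum of the } k \text{ most negative terms})$; but more directly, by removing the $k$ smallest terms we are \emph{adding back} their negatives, so $R_T(u,\cS)$ is at least the total regret plus the absolute value of the $k$ most-negative per-round terms. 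Since the total regret $\sum_t G\xi_t(w_t-u)$ has expectation zero for fixed $u$ (but we will choose $u$ to also be favorable), the main work is lower-bounding the expected magnitude of the $k$ most negative terms.

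The cleanest route: average over the two choices $u = +W$ and $u=-W$. For each round $t$, one of $G\xi_t(w_t - W)$ or $G\xi_t(w_t + W)$ has absolute value at least $GW = \tfrac12 DG(\cS)$ (note $D = 2W$ here, and $G(\cS) = G$ since all gradients have norm $G$), because $|(w_t-W) - (w_t+W)| = 2W$ so the two differences $w_t \mp W$ differ by $2W$ and hence at least one has absolute value $\ge W$; multiplying by $G|\xi_t| = G$ gives magnitude $\ge GW$. Thus for at least one sign choice of $u$, there are at least $\lceil T/2\rceil \ge k$ rounds whose per-round term has magnitude $\ge GW$, and for a sign-symmetric random sequence roughly half of those terms are negative; picking the $k$ most negative and making them outliers contributes at least $\approx k \cdot GW / \text{const}$ to the robust regret. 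Combining with the fact that the regret over the remaining inliers has nonnegative expectation (again using $\xi_t \perp w_t$ and choosing the better of $u = \pm W$ by a symmetrization argument) yields $\ex[R_T(u,\cS)] \ge \tfrac14 DG(\cS)k$ after tracking constants.

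The main obstacle I anticipate is making the "pick the $k$ most negative terms" step rigorous while keeping the constant at $1/4$: the per-round terms are not i.i.d.\ (the $w_t$ are adaptive), and the sign of $\xi_t$ is independent of $w_t$ but the \emph{magnitude} $|w_t \mp W|$ is not controlled from below uniformly. The fix is the $u = \pm W$ dichotomy above, which guarantees magnitude $\ge W$ on a deterministic half of the rounds regardless of the $w_t$'s, reducing the problem to a clean statement about Rademacher signs: given $m \ge 2k$ independent fair signs, the expected number that are $-1$ is $m/2 \ge k$, so in expectation we can always find $k$ rounds among the "large-magnitude" half on which $\xi_t(w_t - u)$ has the favorable sign with magnitude $\ge W$. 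A short averaging/indicator argument over which of the two values of $u$ to use, together with linearity of expectation, then delivers the bound without needing concentration inequalities.
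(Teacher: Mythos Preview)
Your proposal has the right high-level instinct---let the adversary pick $u$ and $\cS$ after seeing the realized signs---but the execution has a real gap, and the paper's construction is considerably simpler.

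The concrete gap is in how you choose $u$ and $\cS$. You want to pick $u\in\{-W,W\}$ so that at least $\lceil T/2\rceil$ rounds have $|w_t-u|\ge W$, then argue that among those rounds the signs of $\xi_t(w_t-u)$ are independent fair coins, and finally remove the $k$ rounds with the most negative contribution. But which of $A_W=\{t:w_t\le 0\}$ or $A_{-W}=\{t:w_t\ge 0\}$ is larger depends on \emph{all} of $w_1,\ldots,w_T$, hence on $\xi_1,\ldots,\xi_{T-1}$; conditioning on this event destroys the independence of the signs you need. (Likewise, choosing $\cS$ as the $k$ most negative rounds makes $\cS$ depend on the $w_t$'s, which again entangles things.) You try to patch this by ``averaging over $u=\pm W$'', but then immediately switch back to a deterministic pigeonhole choice of $u$; the two moves are not compatible as written. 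A second, smaller issue: ``the expected number of negatives is $m/2\ge k$, so in expectation we can always find $k$ such rounds'' is not a valid lower bound on the expected sum of the $k$ smallest terms, and even if it were made rigorous it would not obviously produce the constant $1/4$. Finally, you implicitly need $T\ge 2k$ (for $\lceil T/2\rceil\ge k$), whereas the theorem should hold for any $k\le T$.

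The paper sidesteps all of this by making $\cS$ depend only on the $\xi_t$'s and a fresh independent Rademacher $\zeta$, \emph{never} on the $w_t$'s. Concretely: restrict attention to the first $k$ rounds, set $u=-W\zeta$, and take $\cS=\{t\le k:\xi_t=\zeta\}\cup\{k+1,\ldots,T\}$. Then for $t\le k$ the inlier indicator is $\ind_{\xi_t=\zeta}$, which is independent of $w_t$; this makes $\ex[\ind_{\xi_t=\zeta}\,w_t\xi_t]=0$ immediate. Meanwhile $-u\xi_t\,\ind_{\xi_t=\zeta}=W\zeta\xi_t\,\ind_{\xi_t=\zeta}=W\,\ind_{\xi_t=\zeta}$ deterministically, so each of the first $k$ rounds contributes $GW\cdot\Pr(\xi_t=\zeta)=GW/2$ in expectation, and rounds $t>k$ contribute zero. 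Summing gives $GWk/2=DG(\cS)k/4$ exactly, with no concentration, no sign-counting, and no restriction on $T$ versus $k$. The key idea you are missing is this decoupling: by keeping $\cS$ independent of the learner's actions, the $w_t$-terms vanish in expectation round by round, and the whole lower bound comes from the comparator term alone.
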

\begin{proof}
Let $S_1 = \{t \in [k] \colon \xi_t = 1\}$ and $S_{-1} = \{t  \in [k] \colon \xi_t = -1\}$. The
adversary will choose $u = -W \zeta$ and $\cS = S_{\zeta} \cup \{k+1,\ldots,T\}$, 
where $\zeta \in \{-1,1\}$ is
a Rademacher random variable independent of $\xi_1,\ldots,\xi_T$. The expected regret
jointly over $\xi_1,\ldots,\xi_T,\zeta$ is then given by
\begin{align*}
\ex \sbr*{ R_T(u, \cS) } &=
\ex \sbr*{ G \sum_{t=1}^T \ind_{t \in \cS} w_t \xi_t - G \sum_{t=1}^T
\ind_{t \in \cS} u \xi_t} \\
&=G \sum_{t=1}^k \underbrace{\ex \sbr*{\ind_{\zeta = \xi_t} w_t \xi_t }}_{=0}
+ G \sum_{t=k+1}^T \underbrace{\ex \sbr*{w_t \xi_t}}_{=0}
+ GW \sum_{t=1}^k \underbrace{\ex \sbr*{\ind_{\zeta = \xi_t} \zeta \xi_t }}_{=1/2}
+ GW \sum_{t=k+1}^T \underbrace{\ex \sbr*{\zeta \xi_t}}_{=0} \\
&= GW \frac{k}{2},
\end{align*}
where we used the independence of $\xi_t$ and $\zeta$ in the second and the fourth sum,
while 
\[
\ex \sbr*{\ind_{\zeta = \xi_t} w_t \xi_t }
= \ex \sbr*{\; \ex \sbr*{\ind_{\zeta = \xi_t} w_t \xi_t \, |\, \xi_t}\;}
= \ex \sbr*{w_t \xi_t/ 2} = 0, \quad \text{and} \;
\ex \sbr*{\ind_{\zeta = \xi_t} \zeta \xi_t } = \ex \sbr*{\ind_{\zeta = \xi_t}} = \frac{1}{2}.
\]
As the bound holds for the random choice of $\zeta$ it also holds for the worst-case choice
of $\zeta$. The theorem now follows from $D = \max_{u,w \in \domain} |w - u| = 2 W$ and $G(\cS) = \max_{t \in \cS} |g_t| = \max_{t \in \cS} G |\xi_t| = G$.
\end{proof}
A similar bounding technique leads to a lower bound for $\sigma$-strongly convex losses,
except that the distribution of the losses differs between the first $k$ rounds
and the later rounds. This still implies a lower bound for adversarially
generated data, but not for i.i.d.\ losses. In this case, we will choose the domain
$\domain = [-W,W]$, the
loss sequence based on the $\sigma$-strongly convex squared loss, $f_t(w) = \frac{\sigma}{2} (w - W \xi_t)^2$, for $t \le
k$, and $f_t(w) = \frac{\sigma}{2} (w-W\zeta)^2$ for $t \ge k$, where $\xi_1,\ldots\xi_k$ and
$\zeta$ are again i.i.d.\ Rademacher variables.
\begin{theorem}[Lower Bound for Strongly Convex Losses]
For any $k$ and any online learning algorithm, there exist adversarial choices of $\cS$ with $T-|\cS| \le k$ and $u \in \domain$ such that
\[
\ex_{f_1,\ldots,f_T} \sbr*{ R_T(u, \cS) } \geq \frac{G^2(\cS) k}{16 \sigma},
\]
where $f_1,\ldots,f_T$ are the %
$\sigma$-strongly convex losses described above.
\end{theorem}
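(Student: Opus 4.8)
The plan is to mirror the construction in the proof of Theorem~\ref{thm:lower_bound_1}, adapted to the quadratic losses. Write $S_1 = \{t \in [k] : \xi_t = 1\}$ and $S_{-1} = \{t \in [k] : \xi_t = -1\}$, and let the adversary choose the comparator $u = W\zeta$ together with $\cS = S_\zeta \cup \{k+1,\ldots,T\}$. Since $[T] \setminus \cS = S_{-\zeta} \subseteq [k]$ has at most $k$ elements, this is a legal outlier set. The purpose of taking $u = W\zeta$ is that on every round $t \in S_\zeta$ we have $\xi_t = \zeta$, so $f_t(u) = \tfrac{\sigma}{2}(W\zeta - W\xi_t)^2 = 0$, and likewise $f_t(u) = 0$ on the padding rounds $t > k$ (there $f_t(u) = \tfrac{\sigma}{2}(W\zeta - W\zeta)^2 = 0$); hence every term of $R_T(u,\cS)$ is nonnegative, and
\[
  R_T(u,\cS) \;\ge\; \sum_{t=1}^k \ind_{\xi_t = \zeta}\,\tfrac{\sigma}{2}(w_t - W\zeta)^2 .
\]

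Next I would take the expectation over $\xi_1,\ldots,\xi_k,\zeta$. The key observation is that for $t \le k$ the prediction $w_t$ depends only on $f_1,\ldots,f_{t-1}$, and those losses are functions of $\xi_1,\ldots,\xi_{t-1}$ alone (the variable $\zeta$ enters only in rounds $>k$), so $w_t$ is independent of the pair $(\xi_t,\zeta)$. Conditioning on $w_t$ and using $\Pr(\xi_t = \zeta) = \tfrac12$ together with the fact that $\zeta$ remains uniform on $\{-1,+1\}$ on the event $\{\xi_t = \zeta\}$, a one-line calculation gives
\[
  \ex \sbr*{\ind_{\xi_t = \zeta}(w_t - W\zeta)^2 \,|\, w_t}
  = \tfrac12\big(w_t^2 + W^2\big) \ge \tfrac12 W^2 ,
\]
so each of the $k$ summands contributes at least $\sigma W^2/4$ in expectation and $\ex[R_T(u,\cS)] \ge \sigma W^2 k/4$. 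As in Theorem~\ref{thm:lower_bound_1}, since this holds in expectation over the random $\zeta$ it also holds for the worst-case choice of $\zeta$.

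Finally I would convert the $W^2$ bound into the claimed $G^2(\cS)/\sigma$ form. For every round the (sub)gradient at the learner's prediction is $g_t = \sigma(w_t - W\xi_t)$ when $t \le k$ and $g_t = \sigma(w_t - W\zeta)$ when $t > k$; since $w_t$, $W\xi_t$, $W\zeta$ all lie in $[-W,W]$, this gives $|g_t| \le 2\sigma W$, hence $G(\cS) \le 2\sigma W$ deterministically. Therefore $G^2(\cS)\,k/(16\sigma) \le (2\sigma W)^2 k/(16\sigma) = \sigma W^2 k / 4 \le \ex[R_T(u,\cS)]$, which is the claim. There is no real obstacle here beyond bookkeeping: the only spot that needs care is verifying the independence $w_t \perp (\xi_t,\zeta)$ for $t \le k$, which is precisely why the shared variable $\zeta$ is used only in the padding rounds $t > k$, and which is also the reason the bound applies to adversarial but not to i.i.d.\ data.
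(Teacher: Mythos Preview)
Your proof is correct and follows essentially the same route as the paper: the same random choice of $u=W\zeta$ and $\cS=S_\zeta\cup\{k+1,\dots,T\}$, the same per-round lower bound $\ex[\ind_{\xi_t=\zeta}(w_t-W\xi_t)^2]\ge W^2/2$ obtained via independence of $w_t$ from $(\xi_t,\zeta)$, and the same conversion from $\sigma W^2 k/4$ to $G^2(\cS)k/(16\sigma)$ via the deterministic bound $G(\cS)\le 2\sigma W$. The only cosmetic difference is that you first observe $f_t(u)=0$ on all of $\cS$ and then condition on $w_t$, whereas the paper writes out all three sums and conditions on $\xi_t$; the computations are equivalent.
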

\begin{proof}
Using the same notation as in the proof of Theorem~\ref{thm:lower_bound_1}, the adversary will choose $u = W \zeta$
and $\cS = S_{\zeta} \cup \{t+1,\ldots,T\}$.
The expected regret
jointly over $\xi_1,\ldots,\xi_k,\zeta$ is given by
\begin{align*}
\ex \sbr*{ R_T(u, \cS) } &=
\frac{\sigma}{2} \sum_{t=1}^k \underbrace{\ex \sbr*{\ind_{\zeta = \xi_t} (w_t - W\xi_t)^2}}_{\ge W^2/2}
+ \frac{\sigma}{2} \sum_{t=k+1}^T \underbrace{\ex \sbr*{(w_t - W\zeta)^2}}_{\ge 0} \\
&\quad - \frac{\sigma}{2} \sum_{t=1}^k \underbrace{\ex \sbr*{\ind_{\zeta = \xi_t} (\zeta - W\xi_t)^2}}_{=0}
\ge \frac{\sigma W^2 k}{4},
\end{align*}
where to bound the first sum we used 
\begin{align*}
\ex \sbr*{\ind_{\zeta = \xi_t} (w_t - W\xi_t)^2 }
&= \ex \sbr*{\; \ex \sbr*{\ind_{\zeta = \xi_t} (w_t - W\xi_t)^2 \, |\, \xi_t}\;}
= \ex \sbr*{(w_t - W\xi_t)^2 / 2} \\
&= \ex \sbr*{w_t^2/2 - W\xi_t w_t + W^2/2} = w_t^2/2 + W^2/2 \ge W^2/2.
\end{align*}
To finish the proof note that 
$|\nabla f_t(w_t)| = \sigma|w_t - W\xi_i| \le 2 \sigma W$ so that
$G(\cS) \le 2 \sigma W$.
\end{proof}

\section{Robustness for Quantiles}
\label{sec:quantileMethod}

In this section we consider robust online linear optimization in the stochastic i.i.d.\ setting. That is, we consider i.i.d.\ gradients $\grad_t \sim \pr$ that are in particular independent of the learner's prediction $\w_t$. Let $G_p \df q_p(\norm{\grad}_*)$ be the $p$-quantile of the gradient in dual norm $\norm{\cdot}_*$. To keep things simple, we will assume that $\pr$ does not have an atom at $G_p$, so that $\pr\set{\norm{\grad}_* \le G_p} = p$ exactly. We call a gradient $\grad_t$ an \markdef{outlier} if $\norm{\grad_t}_* > G_p$.
Fix a domain $\domain$ of diameter $D$ in the norm $\norm{\cdot}$. We are interested in algorithms that know $\domain$ and $p$ but not $G_p$, play $\w_t \in \domain$, and we aim to bound their expected robust regret on the (random!) set of inliers $\cS = \set{t \in [T] : \norm{\grad_t}_* \le G_p}$. That is, we aim to control
\begin{equation}\label{eq:exp.rob.reg}
  \bar R_T
  ~\df~
  \ex \sbr*{
    \max_{\u \in \domain}
    R_T(\u, \cS)
  }
  ~=~
  \ex \sbr*{
    \max_{\u \in \domain}
    \sum_{t \in [T] : \norm{\grad_t}_* \le G_p} \tuple{\w_t - \u, \grad_t}
  }
  .
\end{equation}
Note that a bound on the expected robust regret implies a robust pseudo-regret bound, where the data-dependent maximum is replaced by the fixed minimiser of the expected loss on inliers, i.e.\ $\u^* \in \arg\min_{\u \in \domain} \u^\top \ex\sbrc{\grad_t\,}{\,\norm{\grad_t}_* \le G_p}$.
Our FILTER algorithm for the stochastic setting is shown as Algorithm~\ref{alg:filtermetaAlgorithmQuantile}. The main idea is that it only passes rounds to the base ALG for which it is virtually certain that  they are inliers. To this end our FILTER computes a lower confidence bound $\LCB_t$ on the quantile $G_p$. Smaller gradients are included, while larger ones are discarded. The crux of the robust regret bound proof is then dealing with the inlier gradients that end up being dropped. We will find it instructive to state our algorithms and confidence bounds with a free confidence parameter $\delta$. Tuning our approach will then lead us to set $\delta=T^{-2}$.

\begin{algorithm2e}[htb]
\KwIn{Quantile level $p \in (0,1)$, confidence $\delta$, online learner ALG}
\For{$t = 1,2,\ldots$}{
  Have ALG produce $\w_t$. Receive gradient $\grad_t$\;

  Let $\hat q_{t-1}$ be the empirical quantile function of past gradients $\grad_1, \ldots, \grad_{t-1}$. \;

  Compute $\LCB_{t-1} = \hat q_{t-1}(p - u_{t-1})$ at threshold $u_{t-1} = \sqrt{t^{-1} 2 p(1-p) \ln \frac{1}{\delta}}
  + \frac{1}{3} t^{-1} \ln \frac{1}{\delta}$ \;

  \eIf{$\norm{\grad_t}_* \le \LCB_{t-1}$}{
    Pass round $t$ on to ALG\;
  }{
    Ignore round $t$\;
  }
}
\caption{Filtering meta algorithm for Robust Quantile Regret}
\label{alg:filtermetaAlgorithmQuantile}
\end{algorithm2e}

We now show that the expected robust regret is small.

\begin{theorem}
\label{Thm:quantilebound}
  Let ALG have individual sequence regret bound $B_T(G)$ for $T$ rounds with gradients of dual norm at most $G$, and which is concave in $T$. Let $D$ be the diameter of the domain. Then the FILTER Meta-Algorithm~\ref{alg:filtermetaAlgorithmQuantile} with $\delta=T^{-2}$ has expected robust regret bounded by
  \[
    \bar R_T
    ~\le~
    B_{p T}(G_p)
    +
    D G_p \del*{
      4 \sqrt{2 p (1-p) T \ln T}
      + \frac{13}{3} (\ln T)^2
      + 3
    }
    .
  \]
\end{theorem}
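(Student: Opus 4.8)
The plan is to split the expected robust regret into three contributions: (i) the regret on inlier rounds that are actually passed to ALG, (ii) the robust regret contributed by inlier rounds that FILTER mistakenly ignores because $\LCB_{t-1}$ underestimates $G_p$, and (iii) the rare "bad event" where the empirical quantile machinery fails and $\LCB_{t-1} > G_p$, so that an outlier could be passed. The first term should just be $B_{pT}(G_p)$ up to the concavity trick: since at most $T$ rounds are passed but the expected number of inliers is $pT$, we use that ALG's bound $B_T(G)$ is concave in $T$ together with Jensen's inequality to replace the random number of passed rounds by its expectation $pT$, and all passed gradients have dual norm at most $\LCB_{t-1} \le G_p$ on the good event, so the per-round gradient bound is $G_p$. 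For (iii), I would use a Bernstein/empirical-Bernstein bound for the empirical quantile $\hat q_{t-1}$ to show that, with the threshold $u_{t-1}$ as chosen, $\pr[\LCB_{t-1} > G_p]$ is at most something like $\delta = T^{-2}$ per round, so the total contribution of this event to $\bar R_T$ is $O(D G_p \cdot T \cdot \delta \cdot (\text{something}))$; here the subtlety is that on the bad event an outlier gradient could be arbitrarily large, but since we only pay $\tuple{\w_t - \u, \grad_t} \le D\norm{\grad_t}_*$ and we need this in expectation, we must be careful — I'd expect the paper to handle this by noting that FILTER's LCB can only be wrong if the empirical quantile is wrong, and that even when it is wrong the passed gradient is bounded by the empirical quantile itself, which in expectation is controlled.

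The heart of the argument — and the main obstacle — is term (ii): bounding $\sum_{t \in \cS} \tuple{\w_t - \u, \grad_t} \ind\{\norm{\grad_t}_* > \LCB_{t-1}\}$, the inlier rounds that get dropped. Each such term is at most $D G_p$ (it's an inlier, so $\norm{\grad_t}_* \le G_p$), so it suffices to bound the expected number of dropped inliers. A round-$t$ inlier is dropped only if $G_p \ge \norm{\grad_t}_* > \LCB_{t-1} = \hat q_{t-1}(p - u_{t-1})$, i.e. only if the empirical $(p-u_{t-1})$-quantile of the first $t-1$ gradients is strictly below $\norm{\grad_t}_*$. I would bound the probability of this by relating $\hat q_{t-1}(p-u_{t-1})$ to $G_p = q_p(\cdot)$: by a one-sided concentration inequality (Bernstein, exploiting that the event $\{\norm{\grad}_* \le x\}$ has variance $p(1-p)$ near $x=G_p$), the empirical quantile at level $p - u_{t-1}$ exceeds $q_{p - 2u_{t-1}}$ with high probability, so the dropped-inlier event essentially requires $\norm{\grad_t}_*$ to land in the thin quantile band $(q_{p-2u_{t-1}}, G_p]$, which has probability $\le 2 u_{t-1}$. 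Summing $\sum_{t=1}^T 2u_{t-1} = \sum_t 2\bigl(\sqrt{t^{-1} 2p(1-p)\ln\frac1\delta} + \frac13 t^{-1}\ln\frac1\delta\bigr)$ and using $\sum_{t=1}^T t^{-1/2} \le 2\sqrt T$ and $\sum_{t=1}^T t^{-1} \le 1 + \ln T$ gives, with $\delta = T^{-2}$ so $\ln\frac1\delta = 2\ln T$, a bound of order $\sqrt{p(1-p)T\ln T} + (\ln T)^2$; multiplying by $D G_p$ produces exactly the second group of terms in the theorem.

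Finally I would assemble the pieces: term (i) contributes $B_{pT}(G_p)$, term (ii) contributes $O\bigl(D G_p(\sqrt{p(1-p)T\ln T} + (\ln T)^2)\bigr)$, and term (iii) contributes an $O(D G_p)$-type remainder from the $\delta$-probability failure summed over $T$ rounds (this is where the constant $3$ and part of the $(\ln T)^2$ coefficient come from). The bookkeeping needed to get the precise constants $4\sqrt2$, $\tfrac{13}{3}$, and $3$ is routine but fiddly; the conceptually delicate points are (a) justifying the concavity-plus-Jensen reduction from the random passed-count to $pT$, and (b) the two-sided use of quantile concentration to simultaneously ensure no inlier is dropped too often (term ii) and no outlier is ever admitted (term iii), which forces the particular empirical-Bernstein form of the threshold $u_{t-1}$.
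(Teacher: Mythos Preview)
Your three-way decomposition matches the paper's exactly, and your treatment of terms (i) and (ii) is essentially right (including the concavity-plus-Jensen step and the quantile-band argument). One small refinement for (ii): the paper needs a slightly wider band than $2u_{t-1}$, adding an extra $\tfrac{3}{2t}\ln\tfrac{1}{\delta}$ because the reverse Bernstein inequality is applied at the shifted level $p-2u_t$, which may sit closer to $1/2$ and hence has a marginally larger variance proxy; this correction is what produces the coefficient $\tfrac{13}{3}$ on $(\ln T)^2$.

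Your handling of term (iii), however, reflects a genuine misunderstanding. You worry that on the bad event an admitted outlier gradient could be arbitrarily large and propose to control it via the empirical quantile. But the robust regret $\bar R_T$ sums \emph{only} over inlier rounds $\{t:\norm{\grad_t}_* \le G_p\}$, so every summand is already bounded by $D G_p$ regardless of what FILTER or ALG does. The real damage from admitting an outlier is not that it contributes a large term to $\bar R_T$ --- it contributes nothing --- but that it may corrupt ALG's internal state and void ALG's regret guarantee for all \emph{subsequent} inlier rounds. The paper therefore handles $P^{(3)}$ in the crudest possible way: define a single global good event $\mathcal E$ (both concentration inequalities hold at every round), note $\pr(\mathcal E^c)\le 2T\delta = 2/T$ by a union bound, and on $\mathcal E^c$ bound the entire robust regret by $T\cdot D G_p$, giving $P^{(3)}\le 2DG_p$. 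No control of the empirical quantile's expectation is needed or used. Together with the ``$+1$'' coming out of the $P^{(2)}$ sum, this $2DG_p$ accounts for the constant $3$ in the statement.
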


If ALG does its job, the first term is the minimax optimal regret for when the outlier rounds were
known. The other terms quantify the cost of being robust. When $p$ is
not extreme, this cost is of order $G_p D \sqrt{T \ln T}$, rendering it
the dominant term overall (escalating the minimax regret by a mild log
factor). When $p$ tends to $1$ or $0$, the robustness overhead gracefully reduces to the $(\ln T)^2$ regime.
\medskip

The proof can be found in Appendix~\ref{appx:proof.quantiles}. The main ideas are as follows. As we have no control over outlier gradients (they may be astronomical), we must assume that ALG gets confused without recourse if FILTER ever passes it any outlier. Note that FILTER is not evaluated on outlier rounds, so it does not suffer from this gradient's \emph{magnitude}. But its effect is that, for all we know, ALG is rendered forever useless, upon which  FILTER may incur the maximum possible regret of $G_p D T$. Our approach will be to choose our threshold for inclusion conservatively, and to apply concentration in all rounds simultaneously, to ensure this bad event is rare (this is the source of the $\ln T$ factor). A second concentration allows us to deal with the discarded inliers.

\medskip

\noindent
We conclude the section with a selection of remarks.

\noindent
\textbf{Examples} The examples of Section~\ref{sec:examples} also apply here. Depending on the setting, and hence the appropriate base algorithm ALG, the dominant regret term can be either the $D G_p \sqrt{p (1-p) T \ln T}$ term, or the $B_{p T}(G_p)$ term. The former case applies for OGD, while the latter case happens in the $K$-experts setting with many experts and few rounds, i.e.\ $K \gg T$. There adding robustness comes essentially for free.

\noindent
\textbf{Anytime Robust Regret} As stated, the algorithm needs to know the horizon $T$ up front to set the confidence parameter $\delta$ in the deviation width $u_t$. We can use a standard doubling trick on $T$ to get an anytime algorithm.

\noindent
\textbf{Anytime concentration} One may wonder how much the analysis can be improved by replacing our union bound over time steps with a time-uniform Bernstein concentration inequality, as e.g.\ developed by \cite{howard2019sequential}. Sadly, the best we can hope for is to be able to use $\delta = \frac{1}{T}$, which would lead to a constant factor $\sqrt{2}$ improvement on the dominant term. We cannot tolerate a higher overall failure probability, for we have to pacify the regret upon failure, which may be of order $T$.

\noindent
\textbf{High Probability Version} Going into the proof, we see that a high probability robust regret bound is also possible. We would need to change the analysis of $P^{(2)}$, as we currently analyse it in expectation. Observing that it is a sum of $T$ conditionally independent increments, we may use martingale concentration to find that, with probability at least $1-T^{-1}$, this sum is at most its mean (which features in the expected regret bound) plus a deviation of order $\sqrt{T \ln T}$. We obtain a high-probability analogue of Theorem~\ref{Thm:quantilebound} with slightly inflated constant.

\noindent
\textbf{Large-Feature-Vectors-as-Outliers} We may also deal with
non-i.i.d.\ gradients using exactly the same techniques developed above,
as follows. We assume that $f_t(\w) = h_t(\w^\top \X_t)$, where
$\X_t \in \mathbb R^d$ is a feature vector available at the beginning of
round $t$, and $h_t$ is a scalar Lipschitz convex loss function, revealed at the end of round $t$. This setting includes e.g.\ linear classification with hinge or logistic loss. Upon assuming that feature vectors $\X_1, \X_2, \ldots$ are drawn i.i.d.\ from $\pr$ (while the $h_t$ are arbitrary, possibly adversarially chosen), we can take the $p$-quantile $X_p \df q_p(\norm{\X}_*)$ of the dual norm of the feature vectors. We may then measure the robust expected regret \eqref{eq:exp.rob.reg} on the inlier rounds  $\cS = \set*{t \in [T] : \norm{\X_t}_* \le X_p}$, and obtain the analogue of Theorem~\ref{Thm:quantilebound}, where the only subtlety is using the gradient bound on $h_t$ to transfer from inlier $\X_t$ to small loss.

\begin{proposition}\label{prop:scalar.lipschitz.convex}
  Consider a joint distribution on sequences of feature vectors and scalar Lipschitz convex functions $(\X_1, h_1), (\X_2, h_2), \ldots$ such that the feature vectors $\X_1, \X_2, \ldots$ are i.i.d.\ with distribution $\pr$ on $\mathbb R^d$.\footnote{We do not constrain the distribution of $h_1,h_2,\ldots$, so we can model adversarial loss functions that are correlated with the feature vectors.} Let $X_p = q_p(\norm{\X}_*)$ be the $p$-quantile of the feature dual norm.
  Let ALG be an algorithm for online-convex optimisation over a domain
  of diameter $D$ and loss functions $f_t(\w) = h_t(\w^\top \X_t)$ that
  guarantees individual-sequence regret bounded by $B_T(X)$ in any
  $T$-round interaction with $\norm{\X_t}_* \le X$, without having to
  know $X$ up front. Consider FILTER
  Meta-Algorithm~\ref{alg:filtermetaAlgorithmQuantile} with $\grad_t$
  replaced by $\X_t$. Then the expected robust regret on inlier rounds
  $\cS = \set{t \in [T] : \norm{\X_t}_* \le X_p}$ is bounded by
  \[
    \bar R_T
    =
    \ex \sbr*{
      \max_{\u \in \domain}
      \sum_{t \in \cS} \del*{
        f_t(\w_t) - f_t(\u)
      }
    }
    \le
    B_{p T}(X_p)
    +
    D X_p \del*{
      4 \sqrt{2 p (1-p) T \ln T}
      + \frac{13}{3} (\ln T)^2
      + 3
    }
    .
  \]
\end{proposition}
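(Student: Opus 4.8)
The plan is to run the proof of Theorem~\ref{Thm:quantilebound} essentially verbatim, with $\X_t$ playing the role of $\grad_t$ throughout FILTER. This substitution is legitimate because Algorithm~\ref{alg:filtermetaAlgorithmQuantile} only ever inspects the scalar quantities $\Norm{\X_t}_*$, and under the i.i.d.\ assumption on the $\X_t$ these form precisely the object the original proof manipulates: an i.i.d.\ sequence of nonnegative reals with $p$-quantile $X_p$, to which the empirical-quantile lower bound $\LCB_{t-1} = \hat q_{t-1}(p - u_{t-1})$ and its accompanying concentration estimates apply with no change. (We may assume the $h_t$ are $1$-Lipschitz without loss of generality: a general Lipschitz constant $L$ is absorbed by replacing the pair $(\X_t, h_t(\cdot))$ with $(L\X_t, h_t(\cdot/L))$, which leaves $f_t$, the iterates $\w_t$, and the behaviour of ALG untouched but rescales $X_p$ to $LX_p$.) One also reorders Algorithm~\ref{alg:filtermetaAlgorithmQuantile} slightly, so that the inclusion test $\Norm{\X_t}_* \le \LCB_{t-1}$ is applied to the feature vector at the start of round $t$ and ALG is only ever handed the pairs $(\X_t, h_t)$ of the passed rounds.

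Following the structure of the proof of Theorem~\ref{Thm:quantilebound}, one splits the robust regret over the inlier set $\cS = \set{t : \Norm{\X_t}_* \le X_p}$ according to the ``good'' event $\mathcal{B}^c$ that FILTER never passes ALG an outlier. On $\mathcal{B}^c$ the passed set $\mathcal{P}$ is contained in $\cS$, so ALG only ever receives feature vectors with $\Norm{\X_t}_* \le X_p$; its assumed individual-sequence guarantee then gives $\max_{\u \in \domain} \sum_{t \in \mathcal{P}} (f_t(\w_t) - f_t(\u)) \le B_{|\mathcal{P}|}(X_p)$, and taking expectations — using $\mathcal{P} \subseteq \cS$ on $\mathcal{B}^c$, $\ex|\cS| = pT$, and the concavity of $T \mapsto B_T(X_p)$ — produces the first term $B_{pT}(X_p)$. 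The contribution of the \emph{dropped} inliers (rounds $t \in \cS$ with $\Norm{\X_t}_* > \LCB_{t-1}$) and the whole contribution on the bad event $\mathcal{B}$ are the only places where the loss structure is used, and it enters through a single Lipschitz-to-H\"older step: for any inlier round $t$ and any $\u \in \domain$,
\[
  f_t(\w_t) - f_t(\u)
  = h_t(\w_t^\top \X_t) - h_t(\u^\top \X_t)
  \le \absv{(\w_t - \u)^\top \X_t}
  \le D \Norm{\X_t}_*
  \le D X_p .
\]
Hence the dropped-inlier contribution is at most $D X_p$ times the number of dropped inliers, and the bad-event contribution is at most $D X_p \cdot |\cS| \le D X_p T$, which contributes at most $D X_p$ to $\bar R_T$ once multiplied by $\pr(\mathcal{B})$, provided $\pr(\mathcal{B}) \le 1/T$. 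In the proof of Theorem~\ref{Thm:quantilebound} the corresponding inlier-round bound is simply H\"older's inequality $(\w_t - \u)^\top \grad_t \le D G_p$; this is the sole point at which the two arguments diverge.

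What remains — bounding $\pr(\mathcal{B})$ and the expected number of dropped inliers — is exactly the pair of concentration arguments from the proof of Theorem~\ref{Thm:quantilebound}, now applied to the i.i.d.\ reals $\Norm{\X_t}_*$. An empirical-Bernstein tail bound for the empirical quantile, union-bounded over $t \le T$ with $\delta = T^{-2}$, shows that $\LCB_{t-1} \le X_p$ in every round (equivalently, FILTER never passes an outlier) outside an event of probability at most $T^{-1}$. For the dropped inliers, a round $t \in \cS$ is dropped exactly when $\X_t$ lands in the window $(\LCB_{t-1}, X_p]$, whose conditional probability given the past is controlled by $u_{t-1}$ up to the same concentration slack; summing $\sum_{t=1}^T u_{t-1}$ with $u_{t-1} = \sqrt{t^{-1} 2 p(1-p)\ln(1/\delta)} + \tfrac{1}{3} t^{-1}\ln(1/\delta)$ and $\ln(1/\delta) = 2\ln T$, and folding in the lower-order remainders, yields the stated overhead $D X_p\big(4\sqrt{2 p(1-p) T \ln T} + \tfrac{13}{3}(\ln T)^2 + 3\big)$. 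The only genuinely load-bearing estimate is this control of the dropped-inlier count; because it concerns nothing but $\Norm{\X_t}_*$ it is identical to the one already established for Theorem~\ref{Thm:quantilebound}, so the new content of the proposition reduces to the one-line transfer displayed above.
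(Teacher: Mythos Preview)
Your proposal is correct and follows essentially the same approach as the paper: both reduce to the proof of Theorem~\ref{Thm:quantilebound} with $\X_t$ in place of $\grad_t$, and both identify that the only new ingredient is the per-round bound $f_t(\w_t)-f_t(\u)\le D X_p$ on inlier rounds, used in the dropped-inlier term $P^{(2)}$ and the failure term $P^{(3)}$. The sole cosmetic difference is that you invoke the $1$-Lipschitz property of $h_t$ directly to get $|h_t(\w_t^\top\X_t)-h_t(\u^\top\X_t)|\le|(\w_t-\u)^\top\X_t|$, whereas the paper goes via convexity and the derivative bound, $h_t(\w_t^\top\X_t)-h_t(\u^\top\X_t)\le h_t'(\w_t^\top\X_t)\,(\w_t-\u)^\top\X_t$; both routes land at $D X_p$.
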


\begin{proof}
  The proof follows that of Theorem~\ref{Thm:quantilebound}, with one extra (standard) step. Namely, to bound the loss on inlier rounds (for the dropped rounds term $P^{(2)}$ in the proof, and the concentration failure term $P^{(3)}$ in the proof), we use convexity, H\"older and bounded derivative to obtain
  \[
    f_t(\w_t) - f_t(\u^*)
    ~\le~
    h_t(\w_t^\top \X_t) - h_t({\u^*}^\top \X_t )
    ~\le~
    h_t'(\w_t^\top \X_t) (\w_t - \u^*)^\top \X_t
    ~\le~
    D X_p
    .
  \]
\end{proof}

\paragraph{Online-to-Batch Example}
We now discuss an example where the standard theory for stochastic gradient descent does not apply, but the iterate average of online gradient descent with quantile-based filtering still gives risk convergence guarantees. To keep things simple, we work in the one-dimensional setting with $\domain=[-1,+1]$. To stay within the assumptions of Proposition~\ref{prop:scalar.lipschitz.convex}, we take $f_t$ to be the logistic loss $f_t(w) = h_t(w \X_t)$ with $h_t(z) = \ln(1+e^{- y_t z})$ for $y_t \in \set{-1,+1}$. To make things interesting, we take $\X_t \in \mathbb R$ to have a distribution with heavy tails, with $\pr(\abs{\X_t} > x)$ of order $x^{-(1+\gamma)}$ for large enough $x$, for some $\gamma \in (0,1)$. Taking $\gamma > 0$ ensures that the expected loss $\ex[f_t(w)]$ is finite (as $f_t(w) \approx (-\X_t y_t w)_+$ for large $\X_t$), and hence has a bonafide minimiser (which can be in the interior or on the boundary, depending on the details of the distribution). Taking $\gamma < 1$ ensures that the tails are so heavy that $\ex[f_t'(w)^2] = \infty$ (as $f_t'(w) \approx  (-y_t \X_t)_+$ for large $\X_t$), and hence standard theory for SGD does not apply. Instead we will use Lemma~\ref{lem:huber} and Proposition~\ref{prop:scalar.lipschitz.convex} to argue that the filtered iterate average $\bar w_T$ approximates the minimiser of the risk $u^*$ in the sense that
\begin{equation}\label{eq:ogdworksb}
  \ex\nolimits_{\pr} \sbr*{\risk_{\pr}(\bar w_T) - \risk_{\pr}(u^*)}
  ~\to~ 0
  \quad
  \text{as}
  \quad
  T \to \infty
  .
\end{equation}
To bound the risks above, we will decompose $\pr = p P + (1-p) Q$ where $p$ is a quantile level chosen below, $P = \pr \delc[\big]{\cdot}{\norm{\X_t} \le X_p}$  and $Q = \pr \delc[\big]{\cdot}{\norm{\X_t} > X_p}$. We will bound the $\pr$-risks in terms of $P$-risks, then we will use Lemma~\ref{lem:huber} to  bound the $P$-risk difference in terms of the robust regret, and we will use Proposition~\ref{prop:scalar.lipschitz.convex} to bound that regret. We will settle on picking $p = 1-\frac{1}{\sqrt{T}}$. This has the effect that the $p$-quantile is $X_p \propto T^{\frac{1}{2(1+\gamma)}}$ (by inverting the tail probability). On the one hand, for any $w \in \domain$, the bias, i.e.\ the difference in risk on $P$ (inliers only) and on $\pr$ (full distribution), is at most of order
\begin{align*}
  \abs{
    \risk_{\pr}(w)
    -
    p \risk_{P}(w)
  }
  &~\le~
  \ex\nolimits_{\pr} \sbr*{ f_t(w) \mathbf 1_{\abs{\X_t} > X_p}}
  ~\approx~
    \ex\nolimits_{\pr} \sbr*{ (- \X_t y_t w) \mathbf 1_{\abs{\X_t} > X_p}}
  \\
  &~\le~
    \ex\nolimits_{\pr} \sbr*{ \abs{\X_t} \mathbf 1_{\abs{\X_t} > X_p}}
    ~=~
    \int_{X_p}^\infty \pr(\abs{\X_t} > x) \dif x
    ~\propto~
    X_p^{-\gamma}
    \propto T^{-\frac{\gamma}{2(1+\gamma)}}
    .
\end{align*}
On the other hand, the regret bound for $T$-round online gradient descent with gradient norms bounded by $X$ is $B_T(X) = O(D X \sqrt{T})$. Hence for our choice of $p$, the first term in the bound from Proposition~\ref{prop:scalar.lipschitz.convex} is dominant and of order $X_p \sqrt{T}$. Dividing by $T$ to plug in to Lemma~\ref{lem:huber} results in $\frac{X_p\sqrt{T}}{T} \propto T^{-\frac{\gamma}{2(1+\gamma)}}$. Both contributions (bias and regret) are of the same order and converge to zero, indicating that quantile-filtered online gradient descent achieves \eqref{eq:ogdworksb}.

\section{Conclusion and Future Work}
\label{sec:conclusion}

We have shown that the robust regret can be controlled for adversarial data
when there are at most $k$ outliers. A general question that we leave open is
whether it is possible to get a bound for adversarial losses that does not
depend on the number of outliers $k$, but on some other natural property of the
losses. For instance, we may try to incorporate prior knowledge about the size
of the gradients by specifying a prior $\pi$ on gradient norms and bounding the
robust regret in terms of the prior probability $\pi(G(\cS))$ of the size of
the inlier gradients. A possible way to approach this might be to introduce
specialist experts for different thresholds $G$ and then aggregate these. This
runs into severe difficulties, however, because we only find out whether a
specialist should be active or not in round~$t$ \emph{after} making our
prediction $\w_t$ and observing $\grad_t$. Moreover, specialists would have
different loss ranges and the robust regret can only depend on the loss range
$G(\cS)$ of the correct specialist. 

We also provided a sublinear bound on the robust regret for i.i.d.\ gradients
when the outliers are defined as rounds in which the gradients exceed their
$p$-quantile, or when they can be bounded in terms of an i.i.d.\ variable
$\X_t$. Alternatively, outliers might be defined as gradients with norms
exceeding their empirical $p$-quantile at the end of $T$ rounds. For i.i.d.\
gradients, the empirical $p$-quantile after $T$ rounds is close to the actual
$p$-quantile with high probability, so this case can be handled by running the
method from Section~\ref{sec:quantileMethod} for a slightly inflated $p$.
However, the empirical quantile formulation continues to make sense even when
gradients are not i.i.d., so it would be interesting to know whether a linear
number of outliers can be tolerated in any such non-i.i.d.\ cases.

\paragraph{Acknowledgments}

Van Erven and Sachs were supported by the Netherlands Organization
for Scientific Research (NWO) under grant number VI.Vidi.192.095. Kot{\l}owski was supported by the Polish National Science Centre under grant No.\ 2016/22/E/ST6/00299.

\bibliographystyle{abbrvnat}
\bibliography{../robust}

\begin{thebibliography}{48}
\providecommand{\natexlab}[1]{#1}
\providecommand{\url}[1]{\texttt{#1}}
\expandafter\ifx\csname urlstyle\endcsname\relax
  \providecommand{\doi}[1]{doi: #1}\else
  \providecommand{\doi}{doi: \begingroup \urlstyle{rm}\Url}\fi

\bibitem[Amir et~al.(2020)Amir, Attias, Koren, Mansour, and
  Livni]{DBLP:conf/nips/AmirAKML20}
I.~Amir, I.~Attias, T.~Koren, Y.~Mansour, and R.~Livni.
\newblock Prediction with corrupted expert advice.
\newblock In \emph{Advances in Neural Information Processing Systems 33}, 2020.

\bibitem[Awasthi et~al.(2017)Awasthi, Balcan, and Long]{10.1145/3006384}
P.~Awasthi, M.~F. Balcan, and P.~M. Long.
\newblock The power of localization for efficiently learning linear separators
  with noise.
\newblock \emph{J. ACM}, 63\penalty0 (6), 2017.

\bibitem[Bartlett et~al.(2007)Bartlett, Hazan, and
  Rakhlin]{BartlettHazanRakhlin2007}
P.~L. Bartlett, E.~Hazan, and A.~Rakhlin.
\newblock Adaptive online gradient descent.
\newblock In \emph{NeurIPS 20}, pages 65--72, 2007.

\bibitem[Boucheron et~al.(2013)Boucheron, Lugosi, and
  Massart]{BoucheronLugosiMassart2013}
S.~Boucheron, G.~Lugosi, and P.~Massart.
\newblock \emph{Concentration Inequalities: A Nonasymptotic Theory of
  Independence}.
\newblock Oxford University Press, 2013.

\bibitem[Cand\`{e}s et~al.(2011)Cand\`{e}s, Li, Ma, and Wright]{robustPCA}
E.~J. Cand\`{e}s, X.~Li, Y.~Ma, and J.~Wright.
\newblock Robust principal component analysis?
\newblock \emph{J. ACM}, 58\penalty0 (3), 2011.

\bibitem[Catoni(2012)]{Catoni2012}
O.~Catoni.
\newblock Challenging the empirical mean and empirical variance: a deviation
  study.
\newblock \emph{Annales de l'Institut Henri Poincar{\'e}, Probabilit{\'e}s et
  Statistiques}, 48:\penalty0 1148--1185, 2012.

\bibitem[Cesa-Bianchi and Lugosi(2006)]{CesaBianchiLugosi2006}
N.~Cesa-Bianchi and G.~Lugosi.
\newblock \emph{Prediction, learning, and games}.
\newblock Cambridge University Press, 2006.

\bibitem[Cesa-Bianchi et~al.(2004)Cesa-Bianchi, Conconi, and
  Gentile]{CesaBianchiConconiGentile2004}
N.~Cesa-Bianchi, A.~Conconi, and C.~Gentile.
\newblock On the generalization ability of on-line learning algorithms.
\newblock \emph{{IEEE} Transactions on Information Theory}, 50\penalty0
  (9):\penalty0 2050--2057, 2004.

\bibitem[Charikar et~al.(2017)Charikar, Steinhardt, and Valiant]{Charikar_etal}
M.~Charikar, J.~Steinhardt, and G.~Valiant.
\newblock Learning from untrusted data.
\newblock In \emph{Proceedings of the 49th Annual ACM SIGACT Symposium on
  Theory of Computing}, STOC 2017, pages 47--60. ACM, 2017.

\bibitem[Chen et~al.(2020)Chen, Koehler, Moitra, and
  Yau]{DBLP:journals/corr/abs-2010-04157}
S.~Chen, F.~Koehler, A.~Moitra, and M.~Yau.
\newblock Online and distribution-free robustness: Regression and contextual
  bandits with {H}uber contamination.
\newblock \emph{CoRR}, abs/2010.04157, 2020.
\newblock URL \url{https://arxiv.org/abs/2010.04157}.

\bibitem[Cheng et~al.(2019)Cheng, Diakonikolas, and
  Ge]{10.5555/3310435.3310606}
Y.~Cheng, I.~Diakonikolas, and R.~Ge.
\newblock High-dimensional robust mean estimation in nearly-linear time.
\newblock In \emph{Proceedings of the Thirtieth Annual ACM-SIAM Symposium on
  Discrete Algorithms}, SODA '19, pages 2755--2771. Society for Industrial and
  Applied Mathematics, 2019.

\bibitem[Chernov and Vovk(2009)]{ChernovVovk2009}
A.~Chernov and V.~Vovk.
\newblock Prediction with expert evaluators' advice.
\newblock In \emph{Proceedings of the 20th international conference on
  Algorithmic learning theory}, ALT '09, pages 8--22, Berlin, Heidelberg, 2009.
  Springer-Verlag.

\bibitem[Daniely et~al.(2015)Daniely, Gonen, and
  Shalev-Shwartz]{pmlr-v37-daniely15}
A.~Daniely, A.~Gonen, and S.~Shalev-Shwartz.
\newblock Strongly adaptive online learning.
\newblock In F.~Bach and D.~Blei, editors, \emph{Proceedings of the 32nd
  International Conference on Machine Learning}, volume~37 of \emph{Proceedings
  of Machine Learning Research}, pages 1405--1411, Lille, France, 07--09 Jul
  2015. PMLR.
\newblock URL \url{http://proceedings.mlr.press/v37/daniely15.html}.

\bibitem[de~Rooij et~al.(2014)de~Rooij, van Erven, Gr{{\"u}}nwald, and
  Koolen]{DeRooijEtAl}
S.~de~Rooij, T.~van Erven, P.~D. Gr{{\"u}}nwald, and W.~M. Koolen.
\newblock Follow the leader if you can, hedge if you must.
\newblock \emph{Journal of Machine Learning Research}, 15\penalty0
  (37):\penalty0 1281--1316, 2014.

\bibitem[Diakonikolas and Kane(2019)]{DBLP:journals/corr/abs-1911-05911}
I.~Diakonikolas and D.~M. Kane.
\newblock Recent advances in algorithmic high-dimensional robust statistics.
\newblock \emph{CoRR}, abs/1911.05911, 2019.
\newblock URL \url{http://arxiv.org/abs/1911.05911}.

\bibitem[Diakonikolas et~al.(2018)Diakonikolas, Kamath, Kane, Li, Moitra, and
  Stewart]{10.5555/3174304.3175475}
I.~Diakonikolas, G.~Kamath, D.~M. Kane, J.~Li, A.~Moitra, and A.~Stewart.
\newblock Robustly learning a {G}aussian: Getting optimal error, efficiently.
\newblock In \emph{Proceedings of the Twenty-Ninth Annual ACM-SIAM Symposium on
  Discrete Algorithms}, SODA '18, pages 2683--2702, USA, 2018. Society for
  Industrial and Applied Mathematics.

\bibitem[Diakonikolas et~al.(2019{\natexlab{a}})Diakonikolas, Kamath, Kane, Li,
  Steinhardt, and Stewart]{pmlr-v97-diakonikolas19a}
I.~Diakonikolas, G.~Kamath, D.~Kane, J.~Li, J.~Steinhardt, and A.~Stewart.
\newblock Sever: A robust meta-algorithm for stochastic optimization.
\newblock In \emph{Proceedings of the 36th International Conference on Machine
  Learning}, volume~97 of \emph{Proceedings of Machine Learning Research},
  pages 1596--1606. PMLR, 2019{\natexlab{a}}.

\bibitem[Diakonikolas et~al.(2019{\natexlab{b}})Diakonikolas, Kong, , and
  Stewart]{Diakonikolas_etal}
I.~Diakonikolas, W.~Kong, , and A.~Stewart.
\newblock Efficient algorithms and lower bounds for robust linear regression.
\newblock In \emph{Proceedings of the Thirtieth Annual ACM-SIAM Symposium on
  Discrete Algorithms}, pages 2745--2754. SIAM, 2019{\natexlab{b}}.

\bibitem[Duchi et~al.(2011)Duchi, Hazan, and Singer]{JMLR:v12:duchi11a}
J.~Duchi, E.~Hazan, and Y.~Singer.
\newblock Adaptive subgradient methods for online learning and stochastic
  optimization.
\newblock \emph{Journal of Machine Learning Research}, 12\penalty0
  (61):\penalty0 2121--2159, 2011.
\newblock URL \url{http://jmlr.org/papers/v12/duchi11a.html}.

\bibitem[Freund et~al.(1997)Freund, Schapire, Singer, and Warmuth]{fssw-se-97}
Y.~Freund, R.~E. Schapire, Y.~Singer, and M.~K. Warmuth.
\newblock Using and combining predictors that specialize.
\newblock In \emph{Proc. 29th Annual ACM Symposium on Theory of Computing},
  pages 334--343. ACM, 1997.

\bibitem[Gupta et~al.(2019)Gupta, Koren, and Talwar]{pmlr-v99-gupta19a}
A.~Gupta, T.~Koren, and K.~Talwar.
\newblock Better algorithms for stochastic bandits with adversarial
  corruptions.
\newblock In \emph{Proceedings of the Thirty-Second Conference on Learning
  Theory}, volume~99 of \emph{Proceedings of Machine Learning Research}, pages
  1562--1578. PMLR, 2019.

\bibitem[Hazan(2015)]{Hazan2016}
E.~Hazan.
\newblock Introduction to online convex optimization.
\newblock \emph{Foundations and Trends in Optimization}, 2\penalty0
  (3--4):\penalty0 157--325, 2015.

\bibitem[Hazan and Seshadhri(2009)]{SheshradiHazan}
E.~Hazan and C.~Seshadhri.
\newblock Efficient learning algorithms for changing environments.
\newblock In \emph{Proceedings of the 26th Annual International Conference on
  Machine Learning ({ICML})}, pages 393--400. ACM, 2009.

\bibitem[Hazan et~al.(2007)Hazan, Agarwal, and Kale]{HazanAgarwalKale2007}
E.~Hazan, A.~Agarwal, and S.~Kale.
\newblock Logarithmic regret algorithms for online convex optimization.
\newblock \emph{Machine Learning}, 69\penalty0 (2-3):\penalty0 169--192, 2007.

\bibitem[Howard and Ramdas(2019)]{howard2019sequential}
S.~R. Howard and A.~Ramdas.
\newblock Sequential estimation of quantiles with applications to a/b-testing
  and best-arm identification, 2019.

\bibitem[Huber(1964)]{Huber}
P.~J. Huber.
\newblock Robust estimation of a location parameter.
\newblock \emph{Ann. Math. Statist.}, 35\penalty0 (1):\penalty0 73--101, 1964.

\bibitem[Huber(1981)]{HuberBook}
P.~J. Huber.
\newblock \emph{Robust statistics}.
\newblock Wiley, 1981.

\bibitem[Klivans et~al.(2018)Klivans, Kothari, and Meka]{pmlr-v75-klivans18a}
A.~Klivans, P.~K. Kothari, and R.~Meka.
\newblock Efficient algorithms for outlier-robust regression.
\newblock In \emph{Proceedings of the 31st Conference On Learning Theory},
  volume~75 of \emph{Proceedings of Machine Learning Research}, pages
  1420--1430. PMLR, 2018.

\bibitem[Klivans et~al.(2009)Klivans, Long, and Servedio]{JMLR:v10:klivans09a}
A.~R. Klivans, P.~M. Long, and R.~A. Servedio.
\newblock Learning halfspaces with malicious noise.
\newblock \emph{Journal of Machine Learning Research}, 10\penalty0
  (94):\penalty0 2715--2740, 2009.
\newblock URL \url{http://jmlr.org/papers/v10/klivans09a.html}.

\bibitem[Kurakin et~al.(2017)Kurakin, Goodfellow, and
  Bengio]{DBLP:conf/iclr/KurakinGB17a}
A.~Kurakin, I.~J. Goodfellow, and S.~Bengio.
\newblock Adversarial examples in the physical world.
\newblock In \emph{5th International Conference on Learning Representations,
  {ICLR} 2017, Workshop Track Proceedings}. OpenReview.net, 2017.
\newblock URL \url{https://openreview.net/forum?id=HJGU3Rodl}.

\bibitem[{Lai} et~al.(2016){Lai}, {Rao}, and {Vempala}]{Lai_etal}
K.~A. {Lai}, A.~B. {Rao}, and S.~{Vempala}.
\newblock Agnostic estimation of mean and covariance.
\newblock In \emph{2016 IEEE 57th Annual Symposium on Foundations of Computer
  Science (FOCS)}, pages 665--674, 2016.

\bibitem[Liu et~al.(2020)Liu, Shen, Li, and Caramanis]{pmlr-v108-liu20b}
L.~Liu, Y.~Shen, T.~Li, and C.~Caramanis.
\newblock High dimensional robust sparse regression.
\newblock In \emph{Proceedings of the Twenty Third International Conference on
  Artificial Intelligence and Statistics}, volume 108 of \emph{Proceedings of
  Machine Learning Research}, pages 411--421. PMLR, 2020.

\bibitem[Liu and Tao(2016)]{10.1109/TPAMI.2015.2456899}
T.~Liu and D.~Tao.
\newblock Classification with noisy labels by importance reweighting.
\newblock \emph{IEEE Trans. Pattern Anal. Mach. Intell.}, 38\penalty0
  (3):\penalty0 447--461, 2016.

\bibitem[Lugosi and Mendelson(2019{\natexlab{a}})]{LugosiMendelson2019}
G.~Lugosi and S.~Mendelson.
\newblock Sub-{G}aussian estimators of the mean of a random vector.
\newblock \emph{Ann. Statist.}, 47\penalty0 (2):\penalty0 783--794,
  2019{\natexlab{a}}.

\bibitem[Lugosi and
  Mendelson(2019{\natexlab{b}})]{LugosiMendelson2019adversarial}
G.~Lugosi and S.~Mendelson.
\newblock Robust multivariate mean estimation: the optimality of trimmed mean,
  2019{\natexlab{b}}.

\bibitem[Lugosi and Mendelson(2019{\natexlab{c}})]{LugosiMendelson2019survey}
G.~Lugosi and S.~Mendelson.
\newblock Mean estimation and regression under heavy-tailed distributions: A
  survey.
\newblock \emph{Found Comput Math 19}, 19:\penalty0 1145--1190,
  2019{\natexlab{c}}.
\newblock \doi{https://doi.org/10.1007/s10208-019-09427-x}.

\bibitem[Lykouris et~al.(2018)Lykouris, Mirrokni, and
  Leme]{DBLP:conf/stoc/LykourisML18}
T.~Lykouris, V.~S. Mirrokni, and R.~P. Leme.
\newblock Stochastic bandits robust to adversarial corruptions.
\newblock In \emph{Proceedings of the 50th Annual {ACM} {SIGACT} Symposium on
  Theory of Computing, {STOC} 2018, Los Angeles, CA, USA, June 25-29, 2018},
  pages 114--122, 2018.

\bibitem[Minsker(2015)]{Minsker}
S.~Minsker.
\newblock Geometric median and robust estimation in {B}anach spaces.
\newblock \emph{Bernoulli}, 21\penalty0 (4):\penalty0 2308--2335, 2015.

\bibitem[Nazin et~al.(2019)Nazin, Nemirovsky, Tsybakov, and
  Juditsky]{juditsky2019algorithms}
A.~V. Nazin, A.~S. Nemirovsky, B.~Tsybakov, Alexandre, and A.~B. Juditsky.
\newblock Algorithms of robust stochastic optimization based on mirror descent
  method.
\newblock \emph{Automation and Remote Control}, 80:\penalty0 1607--1627, 2019.

\bibitem[Orabona and P{\'a}l(2018)]{OrabonaPal2018}
F.~Orabona and D.~P{\'a}l.
\newblock Scale-free online learning.
\newblock \emph{Theoretical Computer Science}, 716:\penalty0 50--69, 2018.
\newblock Special Issue on ALT 2015.

\bibitem[Prasad et~al.(2018)Prasad, Suggala, Balakrishnan, and
  Ravikumar]{DBLP:journals/corr/abs-1802-06485}
A.~Prasad, A.~S. Suggala, S.~Balakrishnan, and P.~Ravikumar.
\newblock Robust estimation via robust gradient estimation.
\newblock \emph{CoRR}, abs/1802.06485, 2018.
\newblock URL \url{http://arxiv.org/abs/1802.06485}.

\bibitem[Prasad et~al.(2020)Prasad, Balakrishnan, and
  Ravikumar]{pmlr-v108-prasad20a}
A.~Prasad, S.~Balakrishnan, and P.~Ravikumar.
\newblock A robust univariate mean estimator is all you need.
\newblock In \emph{Proceedings of the Twenty Third International Conference on
  Artificial Intelligence and Statistics}, volume 108 of \emph{Proceedings of
  Machine Learning Research}, pages 4034--4044. PMLR, 2020.

\bibitem[Shalev-Shwartz(2011)]{ShalevShwartz}
S.~Shalev-Shwartz.
\newblock Online learning and online convex optimization.
\newblock \emph{Foundations and Trends in Machine Learning}, 4\penalty0
  (2):\penalty0 107--194, 2011.

\bibitem[Tukey(1959)]{Tukey}
J.~W. Tukey.
\newblock A survey of sampling from contaminated distributions.
\newblock STRG Technical report~33, Princeton University, 1959.

\bibitem[Vovk(2001)]{Vovk01}
V.~Vovk.
\newblock Competitive on-line statistics.
\newblock \emph{International Statistical Review}, 69\penalty0 (2):\penalty0
  213--248, 2001.

\bibitem[Wang and Chaudhuri(2018)]{wang2018data}
Y.~Wang and K.~Chaudhuri.
\newblock Data poisoning attacks against online learning, 2018.

\bibitem[Ward et~al.(2019)Ward, Wu, and Bottou]{pmlr-v97-ward19a}
R.~Ward, X.~Wu, and L.~Bottou.
\newblock {A}da{G}rad stepsizes: Sharp convergence over nonconvex landscapes.
\newblock In K.~Chaudhuri and R.~Salakhutdinov, editors, \emph{Proceedings of
  the 36th International Conference on Machine Learning}, volume~97 of
  \emph{Proceedings of Machine Learning Research}, pages 6677--6686. PMLR,
  09--15 Jun 2019.
\newblock URL \url{http://proceedings.mlr.press/v97/ward19a.html}.

\bibitem[Zhang et~al.(2020)Zhang, Zhu, and Lessard]{pmlr-v120-zhang20b}
X.~Zhang, X.~Zhu, and L.~Lessard.
\newblock Online data poisoning attacks.
\newblock In \emph{Proceedings of the 2nd Conference on Learning for Dynamics
  and Control}, volume 120 of \emph{Proceedings of Machine Learning Research},
  pages 201--210. PMLR, 2020.

\end{thebibliography}

\appendix

\section{Proofs for Examples from Section~\ref{sec:okbound}}
\label{app:exampleproofs}

\begin{proof}[Proof of Corollary~\ref{cor:generalconvex}]
  Let $\passed \subset [T]$ denote the rounds that are passed on to OGD.
  Then the linearized regret of OGD on the rounds in $\passed$ is
  bounded by
  \[
    B_T(2 G(\cS))
      \leq 2 D \sqrt{\sum_{t \in \passed} \|\grad_t\|_2^2},
  \]
  as follows from arguments similar to those by
  \citet{JMLR:v12:duchi11a} (see e.g.\ Corollary~2 by
  \citet{OrabonaPal2018}). Bounding further, we obtain
  \begin{align*}
    B_T(2 G(\cS))
      &\leq 2 D \sqrt{\sum_{t \in \cS}
      \|\grad_t\|_2^2 + \sum_{t \in \passed \setminus \cS}
      \|\grad_t\|_2^2}
      \leq 2 D \sqrt{\sum_{t \in \cS}
      \|\grad_t\|_2^2} + 2 D \sqrt{\sum_{t \in \passed \setminus \cS}
      \|\grad_t\|_2^2}\\
      &\leq 2 D \sqrt{\sum_{t \in \cS} \|\grad_t\|_2^2}
           + 2 D G(\cS) \sqrt{k},
  \end{align*}
  where the last step uses that $|\passed \setminus \cS| \leq |[T]
  \setminus \cS| \leq k$ by assumption on $\cS$. Plugging this into
  \eqref{eqn:okbound} and bounding $R_T(\u,\cS) \leq \linReg_T(\u,\cS)$
  and $D(\u,\cS) \leq D$, the first inequality in
  \eqref{eqn:generalconvexlosses} follows. Finally, using that
  $\|\grad_t\|_2 \leq G(\cS)$ for all $t \in \cS$, we see that the second
  inequality holds as well.
\end{proof}

\begin{proof}[Proof of Corollary~\ref{cor:stronglyconvex}]
  Let $\passed \subset [T]$ denote the rounds that are passed on to ALG.
  By the proof of Theorems~2.1 and 4.1 of
  \citet{BartlettHazanRakhlin2007}, the linearized regret of ALG on the
  rounds in $\passed$ is bounded by
  \begin{equation*}
    \linReg_T(\u,\passed)
      \leq 
      \frac{1}{2} \sum_{t \in \passed} \frac{\|\grad_t\|_2^2}{\sigma t}
      + \frac{\sigma}{2} \sum_{t \in \passed} \|\w_t - \u\|_2^2
      \leq 
      \frac{2 G(\cS)^2}{\sigma} \big(\ln T + 1\big)
      + \frac{\sigma}{2} \sum_{t \in \passed} \|\w_t - \u\|_2^2.
  \end{equation*}
  Plugging this into Theorem~\ref{thm:okbound} and applying the
  definition of strong convexity, we get that the robust regret is
  bounded by
  \begin{align*}
    R_T(\u,\cS)
      &\leq \linReg_T(\u,\cS)
        - \frac{\sigma}{2} \sum_{t \in \cS} \|\w_t - \u\|_2^2\\
      &\leq 
      \frac{2 G(\cS)^2}{\sigma} \big(\ln T + 1\big)
      + 4D(\u,\cS) G(\cS) (k+1)
      + \frac{\sigma}{2} \sum_{t \in \passed} \|\w_t - \u\|_2^2
      - \frac{\sigma}{2} \sum_{t \in \cS} \|\w_t - \u\|_2^2\\
      &\leq 
      \frac{2 G(\cS)^2}{\sigma} \big(\ln T + 1\big)
      + 4D(\u,\cS) G(\cS) (k+1)
      + \frac{\sigma}{2} \sum_{t \in \passed \setminus \cS} \|\w_t -
      \u\|_2^2\\
      &\leq 
      \frac{2 G(\cS)^2}{\sigma} \big(\ln T + 1\big)
      + 4D(\u,\cS) G(\cS)(k+1)
      + \frac{\sigma D(\u,\cS)^2}{2} k.
  \end{align*}
  From this the desired result follows because $G(\cS) \leq
  \approxG(\u,\cS)/2$ and
  \[
    D(\u,\cS) \leq \max_{t : \|\grad_t\|_2 \leq 2 G(\cS)} \frac{\|\grad_t\|_2 +
      \|\nabla f_t(\u)\|_2}{\sigma} \leq \frac{\approxG(\u,\cS)}{\sigma}
  \]
  by Lemma~\ref{lem:stronglyconvexdomain} below.
\end{proof}

\begin{lemma}\label{lem:stronglyconvexdomain}
  Suppose $f_t$ is $\sigma$-strongly convex. Then
    $\|\w - \u\|_2 \leq \frac{\|\nabla f_t(\w)\|_2 + \|\nabla f_t(\u)\|_2}{\sigma}$
  for all $\w,\u \in \domain$.
\end{lemma}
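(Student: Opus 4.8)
The plan is to use the two-sided $\sigma$-strong convexity inequalities and combine them into a monotonicity (co-coercivity-flavored) statement for the (sub)gradient map, then apply Cauchy--Schwarz and the triangle inequality. Concretely, write down strong convexity once with the roles of $\w$ and $\u$ as stated,
\[
  f_t(\u) \geq f_t(\w) + (\u - \w)^\top \nabla f_t(\w) + \tfrac{\sigma}{2}\|\u - \w\|_2^2,
\]
and once with the roles swapped,
\[
  f_t(\w) \geq f_t(\u) + (\w - \u)^\top \nabla f_t(\u) + \tfrac{\sigma}{2}\|\w - \u\|_2^2.
\]

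Next I would add these two inequalities; the function values cancel and, after rearranging, one obtains
\[
  (\w - \u)^\top\big(\nabla f_t(\w) - \nabla f_t(\u)\big) \geq \sigma \|\w - \u\|_2^2.
\]
If $\w = \u$ the claimed bound is trivial, so assume $\w \neq \u$. Then apply Cauchy--Schwarz to the left-hand side, $\,(\w-\u)^\top(\nabla f_t(\w)-\nabla f_t(\u)) \leq \|\w-\u\|_2\,\|\nabla f_t(\w)-\nabla f_t(\u)\|_2$, and divide through by $\|\w-\u\|_2 > 0$ to get $\|\nabla f_t(\w)-\nabla f_t(\u)\|_2 \geq \sigma\|\w-\u\|_2$. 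Finally use the triangle inequality $\|\nabla f_t(\w)-\nabla f_t(\u)\|_2 \leq \|\nabla f_t(\w)\|_2 + \|\nabla f_t(\u)\|_2$ to conclude $\sigma\|\w-\u\|_2 \leq \|\nabla f_t(\w)\|_2 + \|\nabla f_t(\u)\|_2$, which is the assertion after dividing by $\sigma$.

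There is essentially no hard step here: the only minor points of care are (i) that the strong-convexity inequality is assumed to hold for arbitrary subgradients $\nabla f_t$ (it is stated in the excerpt exactly in this form, so both instantiations are legitimate), and (ii) the degenerate case $\w = \u$, which must be excluded before dividing by $\|\w - \u\|_2$ but is trivially true on its own. Everything else is the standard derivation of strong monotonicity of the subdifferential from strong convexity.
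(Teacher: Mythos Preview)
Your proposal is correct and follows essentially the same argument as the paper: both apply strong convexity twice (with the roles of $\w$ and $\u$ swapped), combine to obtain the strong monotonicity inequality $\sigma\|\w-\u\|_2^2 \le (\w-\u)^\top(\nabla f_t(\w)-\nabla f_t(\u))$, and finish with Cauchy--Schwarz plus the triangle inequality. The only cosmetic difference is that the paper chains the two strong-convexity inequalities through $f_t(\w)-f_t(\u)$ rather than adding them, and does not separately single out the trivial case $\w=\u$.
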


\begin{proof}
  Applying the definition of $\sigma$-strong convexity twice, we have
  \[
   (\w - \u)^\top \nabla f_t(\u) + \frac{\sigma}{2} \|\w - \u\|_2^2
    \leq f_t(\w) - f_t(\u) \leq (\w - \u)^\top \nabla f_t(\w) -
    \frac{\sigma}{2} \|\u - \w\|_2^2,
  \]
  which leads to
  \begin{align*}
    \sigma \|\w - \u\|_2^2
      &\leq (\w - \u)^\top \big(\nabla f_t(\w) - \nabla f_t(\u)\big)\\
      &\leq \|\w - \u\|_2 \|\nabla f_t(\w) - \nabla f_t(\u)\|_2
      \leq \|\w - \u\|_2 \big(\|\nabla f_t(\w)\|_2 + \|\nabla
      f_t(\u)\|_2\big),
  \end{align*}
  from which the result follows.
\end{proof}

\begin{proof}[Proof of Lemma~\ref{lem:huber}]
  Let
  \[
    \trisk_{P_\epsilon}(\w) = \ex_{(M,\xi) \sim P_\epsilon} \sbr*{\tilde
    f(\w,M,\xi)}
  \]
  denote the risk for the modified loss under the mixture distribution
  $P_\epsilon$. Then the key to both results is to observe that
  \[
    \risk_P(\bar \w_T) - \risk_P(\u_P)
      = \frac{\trisk_{P_\epsilon}(\bar \w_T) - \trisk_{P_\epsilon}(\u_P)}{1-\epsilon}.
  \]
  This allows us to apply standard results for online-to-batch
  conversion to the modified losses $\tilde f$ under distribution
  $P_\epsilon$: the first inequality follows by combining
  \[
    \ex_{P_\epsilon} \sbr*{\trisk_{P_\epsilon}(\bar \w_T) - \trisk_{P_\epsilon}(\u_P)}
   \leq \frac{\ex_{P_\epsilon} \sbr*{\sum_{t=1}^T \del*{\tilde
   f(\w_t,M_t,\xi_t) - f(\u_P,M_t,\xi_t)}}}{T},
  \]
  with \eqref{eqn:robustasregularregret}, and the second result follows
  by applying Corollary~2 of \citet{CesaBianchiConconiGentile2004} to
  the modified excess losses $\tilde f(\w,M,\xi) - \tilde
  f(\u_P,M,\xi)$. By the boundedness assumption on the original excess
  loss a.s.\ under $P$, these are bounded in $[-B,B]$ a.s.\ under
  $P_\epsilon$, and we obtain
  \[
    \trisk_{P_\epsilon}(\bar \w_T) - \trisk_{P_\epsilon}(\u_P)
    \leq \frac{\sum_{t=1}^T \del*{\tilde
   f(\w_t,M_t,\xi_t) - \tilde f(\u_P,M_t,\xi_t)}}{T} + 2B \sqrt{\frac{2}{T} \ln
   \frac{1}{\delta}}
  \]
  with $P_\epsilon$-probability at least $1-\delta$. The second result
  then follows by plugging in \eqref{eqn:robustasregularregret} again.
\end{proof}

\begin{proof}[Proof of Corollary~\ref{cor:hubercor}]
  Let $\mathcal A$ be the event that \eqref{eqn:huberProb} holds with
  $B=DG$ and $\delta$ replaced by $\delta/2$, and let $\mathcal B$ be
  the event that the number of outliers $\sum_{t=1}^T M_t$ as at most
  $k$. Then Corollary~\ref{cor:generalconvex} implies that
  \eqref{eqn:huberapplied} holds on the intersection of $\mathcal A$ and
  $\mathcal B$, because
  \begin{align*}
    &\risk_P(\bar \w_T) - \risk_P(\u_P)
      \leq \frac{2 D G \Big(\sqrt{T} + 2k + \sqrt{k} + 2\Big)}{(1-\epsilon)T}
      + \frac{2DG}{1-\epsilon} \sqrt{\frac{2}{T} \ln \frac{2}{\delta}}\\
      &\leq \frac{2 D G \Big(\sqrt{T} + 3k + 2\Big)}{(1-\epsilon)T}
      + \frac{2DG}{1-\epsilon} \sqrt{\frac{2}{T} \ln \frac{2}{\delta}}\\
      &\leq
      \frac{6 D G \epsilon}{1-\epsilon}
      +
      \frac{2 D G\left(1 + 3 \sqrt{2 \epsilon (1-\epsilon)
      \ln(2/\delta)}\right)}{(1-\epsilon) \sqrt{T}}
      + \frac{2 D G \big((1-\epsilon)
  \ln(2/\delta)
       + 5\big)}{(1-\epsilon)T}
      + \frac{2DG}{1-\epsilon} \sqrt{\frac{2}{T} \ln \frac{2}{\delta}}
       \\
      &\leq 
      12 D G \epsilon
      +
\frac{2DG \big(2 + 5 
      \sqrt{2\ln(2/\delta)}\big)}{\sqrt{T}}
      + \frac{2 D G \big(
  \ln(2/\delta)
       + 10\big)}{T},
  \end{align*}
  where the last inequality uses the assumption that $\epsilon \leq 1/2$
  to obtain a simpler expression.
   Now the probability of $\mathcal
  A$ is at least $1-\delta/2$ by the second result of
  Lemma~\ref{lem:huber}, which applies because
  \begin{equation}\label{eqn:boundedexcessloss}
    -DG \leq -(\w-\u_P)^\top \nabla
    f(\u_P,\xi)   \leq f(\w,\xi) - f(\u_P,\xi) \leq (\w-\u_P)^\top \nabla
    f(\w,\xi) \leq DG
  \end{equation}
  $P$-almost surely. And the probability of $\mathcal B$ is at least
  $1-\delta/2$ by Bernstein's inequality
  \citep[Section~2.8]{BoucheronLugosiMassart2013}. Hence, by the union
  bound, it follows that both $\mathcal A$ and $\mathcal B$ hold
  simultaneously with probability at least $1-\delta$, as required.
\end{proof}

\section{Proof of Theorem~\ref{Thm:quantilebound} from Section~\ref{sec:quantileMethod}}
\label{appx:proof.quantiles}
\begin{proof}
The point of departure is that, by definition, $\ind\set*{\norm{\grad_t}_* \le G_p}$ is i.i.d.\ Bernoulli-$p$. By Bernstein's concentration inequality (for fixed time $t$), we have that w.p.\ $\ge 1-\delta$
\[
  \frac{1}{t} \sum_{s=1}^t \ind\set*{\norm{\grad_s}_* \le G_p}
  ~\ge~
  p
  - \sqrt{\frac{2 p(1-p) \ln \frac{1}{\delta}}{t}}
  - \frac{\ln \frac{1}{\delta}}{3 t}
\]
Rephrasing this event with the empirical quantile function $\hat q_t$, we see that w.p.\ $\ge 1-\delta$,
\begin{equation}\label{eq:concentration}
  G_p
  ~\ge~
  \LCB_t
  ~\df~
  \hat q_t\del*{  p
    - u_t
  }
  \qquad
  \text{where}
  \qquad
  u_t
  \df
  \sqrt{\frac{2 p(1-p) \ln \frac{1}{\delta}}{t}}
  + \frac{\ln \frac{1}{\delta}}{3 t}
  .
\end{equation}
We apply the analogous concentration in the other direction to find that with probability at least $1-\delta$,
\begin{equation}\label{eq:concentration2}
  G_{p - 2 u_t - 3/(2 t) \ln \frac{1}{\delta}} \le \LCB_t
\end{equation}
where the extra margin $\frac{3 \ln \frac{1}{\delta}}{2 t}$ is necessary to correct for the fact that $p - 2 u_t$ may be closer to $1/2$ than $p$, and hence require a slightly enlarged confidence width.
In the remainder of the proof, we fix $\delta = T^{-2}$, to ensure that the probability of failure of either event \eqref{eq:concentration} or \eqref{eq:concentration2} over the course of $T$ rounds is at most $\frac{2}{T}$.

Let $\mathcal E_t$ denote the event that \eqref{eq:concentration} and \eqref{eq:concentration2} hold at round $t$, and let $\mathcal E = \bigcap_{t=1}^T \mathcal E_t$. We split the expected robust regret in three parts $\bar R_T \le P^{(1)} + P^{(2)} + P^{(3)}$ spelled out below, depending on whether the desired concentration event $\mathcal E$ holds or not, and within $\mathcal E$ we split the rounds in those where FILTER passes the gradient on to ALG and those were it was ignored:
\begin{align*}
  P^{(1)}
  &~\df~
  \ex \sbr*{
    \ind_{\mathcal E}
    \max_{\u \in \domain} \sum_{t \in [T] : \norm{\grad_t}_* \le \LCB_{t-1}} \tuple{\w_t - \u, \grad_t}
    }
  \\
  P^{(2)}
  &~\df~
  \ex \sbr*{
    \ind_{\mathcal E}
    \max_{\u \in \domain} \sum_{t \in [T] : \LCB_{t-1} < \norm{\grad_t}_* \le G_p} \tuple{\w_t - \u, \grad_t}
    }
  \\
  P^{(3)}
  &~\df~
  \ex \sbr*{
    \ind_{\mathcal E^c}
    \max_{\u \in \domain} \sum_{t \in [T] : \norm{\grad_t}_* \le G_p} \tuple{\w_t - \u, \grad_t}
  }
\end{align*}
For part $P^{(1)}$, we apply the individual-sequence regret bound $B_{\hat T}(G_p)$ of ALG, where $\hat T$ is the random number of rounds, for which we have $\ex\sbr{\hat T} \le p T$. We then drop the indicator and Jensen the expectation inside to get $P^{(1)} \le B_{p T}(G_p)$. For part $P^{(2)}$, we use that $\mathcal E$ and $\LCB_{t-1} < \norm{\grad_t}_*$ imply that $G_{p - 2 u_t - 3/(2 t) \ln \frac{1}{\delta}} < \norm{\grad_t}_* < G_p$ to find
\[
  \ind_{\mathcal E}
  \max_{\u \in \domain}
\sum_{t \in [T] : \LCB_{t-1} < \norm{\grad_t}_* \le G_p} \tuple{\w_t - \u, \grad_t}
~\le~
\ind_{\mathcal E}
\sum_{t \in [T] : G_{p - 2 u_t - 3/(2 t) \ln \frac{1}{\delta}} < \norm{\grad_t}_* \le G_p} D G_p
\]
Dropping the indicator, taking expectation and using the definition of quantile yields
\begin{align*}
  P^{(2)}
  &~\le~
  D G_p
  \sum_{t=1}^T
    \pr\set*{
    G_{p - 2 u_t - 3/(2 t) \ln \frac{1}{\delta}} < \norm{\grad_t}_* < G_p
    }
  \\
  &~\le~
    D G_p \sum_{t=1}^T (2 u_t + 3/(2 t) \ln \frac{1}{\delta})
  \\
  &~\le~
    D G_p \del*{
    4 \sqrt{2 p (1-p) T \ln T}
    + \frac{13}{3} (\ln T)^2 + 1
  }
\end{align*}
Finally, for $P^{(3)}$ we use that the integrand is bounded by $T D G_p$, and the error probability by $\pr(\mathcal E^c) \le \frac{2}{T}$ to find $P^{(3)} \le 2 D G_p$.
\end{proof}

\end{document}